\newcommand{\onlyTR}[1]{\iftoggle{techreport}{{#1}}{}}
\newcommand{\onlyJournal}[1]{\iftoggle{techreport}{}{#1}}
\newcommand{\TRJournalSplit}[2]{\iftoggle{techreport}{{#1}}{#2}}
\newtheorem{claim}[theorem]{Claim}
\newtheorem{fact}[theorem]{Fact}
\newcommand{\AlgorithmName}[1]{\label{alg:#1}}
\newcommand{\AppendixName}[1]{\label{app:#1}}
\newcommand{\ClaimName}[1]{\label{clm:#1}}
\newcommand{\CorollaryName}[1]{\label{cor:#1}}
\newcommand{\EquationName}[1]{\label{eq:#1}\text{}}
\newcommand{\FactName}[1]{\label{fact:#1}}
\newcommand{\LemmaName}[1]{\label{lem:#1}}
\newcommand{\PropositionName}[1]{\label{prop:#1}}
\newcommand{\SectionName}[1]{\label{sec:#1}}
\newcommand{\TheoremName}[1]{\label{thm:#1}}
\newcommand{\Algorithm}[1]{Algorithm~\ref{alg:#1}}
\newcommand{\Appendix}[1]{Appendix~\ref{app:#1}}
\newcommand{\Claim}[1]{Claim~\ref{clm:#1}}
\newcommand{\Corollary}[1]{Corollary~\ref{cor:#1}}
\newcommand{\Equation}[1]{\eqref{eq:#1}}
\newcommand{\Fact}[1]{Fact~\ref{fact:#1}}
\newcommand{\Lemma}[1]{Lemma~\ref{lem:#1}}
\newcommand{\Proposition}[1]{Proposition~\ref{prop:#1}}
\newcommand{\Section}[1]{Section~\ref{sec:#1}}
\newcommand{\Theorem}[1]{Theorem~\ref{thm:#1}}
\newenvironment{proofof}[1]{\begin{proof}[of #1]}{\end{proof}}
\newcommand{\Regret}{\operatorname{Regret}}
\newcommand{\bR}{\mathbb{R}}
\newcommand{\cD}{\mathcal{D}}
\newcommand{\cX}{\mathcal{X}}
\newcommand{\cZ}{\mathcal{Z}}
\newcommand{\inner}[2]{\langle\, #1 ,\, #2 \,\rangle}
\renewcommand{\th}{\ifmmode{^{\textrm{th}}}\else{\textsuperscript{th}\ }\fi}
\newcommand{\comment}[1]{}
\definecolor{blu}{rgb}{0,0,1}
\definecolor{gre}{rgb}{0,.5,0}
\definecolor{red}{rgb}{1,0,0}
\def\norm#1{\|#1\|}
\def\R{\mathbb{R}}
\def\E{\mathbb{E}}
\def\X{\mathcal{X}}
\def\D{\mathcal{D}}
\def\KL{\text{KL}}
\def\Reg{\mathrm{Regret}}
\newcommand{\Grad}{\nabla}
\newcommand{\TriBreg}[3]{\ensuremath{D_\Phi({\textstyle{{#1}\atop{#2}}};#3)}}
\begin{document}

\title{Online Mirror Descent and Dual Averaging: \\ Keeping Pace in the Dynamic Case \onlyTR{\\\small TECHNICAL REPORT}}

%
\author{\name Huang Fang \email hgfang@cs.ubc.ca \\
       \name Nicholas J.\ A.\ Harvey \email nickhar@cs.ubc.ca \\
       \name Victor S. Portella \email victorsp@cs.ubc.ca \\
       \name Michael P. Friedlander \email mpf@cs.ubc.ca \\
       \addr Department of Computer Science\\
       University of British Columbia\\
       Vancouver, BC V6T 1Z3, Canada}

\editor{}

\maketitle

\begin{abstract}
    Online mirror descent (OMD) and dual averaging (DA)---two fundamental algorithms for online convex optimization---are known to have very similar (and sometimes identical) performance guarantees when used with a \emph{fixed} learning rate.
    Under \emph{dynamic} learning rates, however, OMD is provably inferior to DA and suffers a  linear regret, even in common settings such as prediction with expert advice.
    We modify the OMD algorithm through a simple technique
    that we call \emph{stabilization}.
    We give essentially the same abstract regret bound for OMD with stabilization and for DA by modifying the classical OMD convergence analysis in a careful and modular way that allows for straightforward and flexible proofs.
    Simple corollaries of these bounds show that OMD with stabilization and DA enjoy the same performance guarantees in many applications---even under dynamic learning rates.
    We also shed light on the similarities between OMD and DA and show simple conditions under which stabilized-OMD and
    DA generate the same iterates.
    Finally, we show how to effectively use dual-stabilization with composite cost functions with simple adaptations to both the algorithm and its analysis.
\end{abstract}

\begin{keywords}
  online learning; mirror descent; dual averaging; stabilization; unknown time horizon.
\end{keywords}

\section{Introduction}
\label{sec:intro}

Online convex optimization (OCO) lies in the intersection of machine learning,
convex optimization and game theory. In OCO, a player is required to make a
sequence of online decisions over discrete time steps and each decision incurs a
cost given by a convex function which is only revealed to the player after they
makes that decision. The goal of the player is to minimize what is known as
\emph{regret}: the difference between the total cost and the cost of the best
decision in hindsight. In this setting, algorithms for the player that attain
sublinear regret (usually under mild conditions on the problem) with respect to
the total number of rounds/decisions \(T\) are considered desirable.

Online mirror descent (OMD) and dual averaging (DA) are two important algorithm
templates for OCO from which many classical online learning algorithms can be
described as special cases (see~\citealp{Shalev-Shwartz12} and~\citealp{McMahan17} for some examples). When the total number \(T\) of decisions
to be made is known in advance, OMD and DA achieve exactly the same regret bound when using the same constant learning
rate~\citep{Hazan16}. However, when the number of decisions is \emph{not} known a priori, there
is a fundamental difference in the regret guarantees for OMD and DA with a
similar adaptive learning rate --- while DA can guarantee sublinear regret bound
$O(\sqrt{T})$ for any $T > 0$~\citep{Nesterov09}, there are instances on which
OMD suffers asymptotically linear $\Omega(T)$ regret~\citep{OrabonaP18}.

The aim of this paper is to introduce a \emph{stabilization} technique that bridges the gap between OMD and DA with
dynamic learning rates. We begin by giving almost identical
abstract regret bounds (depending only on the Bregman divergence between
iterates) for stabilized OMD and DA.

We provide simple and clean proofs showing that OMD with stabilization
works similarly to DA in the following aspects:
\begin{itemize}
    \item With the same adaptive learning rate, OMD with stabilization
    achieves exactly the same regret bound as DA. This mirrors the
    situation in which the time horizon is known in advance and both OMD
    and DA use a constant learning rate.
    
    \item For the problem of prediction with expert advice, OMD with stabilization matches the best known regret bound (even with the same constant and with a dynamic learning rate), which was originally achieved by DA (see~\citealp[Section~2.5]{introduction-online-optimization} and~\citealp[Proposition~2.1]{Ger11}).
    \TRJournalSplit{We show that this regret bound is the best that can be achieved with learning rates of the form $c/\sqrt{t}$.}{In our technical report~\citep{TR}, we show that this regret bound is the best that can be achieved with learning rates of the form $c/\sqrt{t}$.}
    
    \item For the problem of prediction with expert advice, we give a concise proof that OMD with stabilization 
    can achieve a first-order regret bound.
    This means that the regret bound does not depend on $T$ but rather on the cost of the best expert up to time $T$, which is no larger.
    Our analysis matches the best known analysis for DA~\citep{Bubeck15,Ger11}, though it is worse than the best known constant in the literature \citep{YaroshinskyES04}.

    \item We formally compare the iterates generated by DA and OMD (with and without stabilization). This sheds light on the reasons why OMD behaves badly with dynamic learning rates. As a corollary of this comparison, we get simple sufficient conditions for the iterates of DA and (dual-)stabilized OMD to match. This mimics the behavior between DA and OMD when the learning rate is fixed.

    \item For composite functions, we show that a proximal variant of dual-stabilized OMD again achieves exactly the same regret bound as regularized DA~\citep{Xiao}. 
\end{itemize}

\section{Related work}

The concept of mirror descent originates from~\cite{MD} and
renewed interest in OMD began with a modern treatment given
by~\cite{BeckT03}. It then received a lot of attention from the
optimization community due to the recent interest in first-order methods
for large-scale problems. For example, see the works
from~\citet{Duchi10compositeobjective, Allen-ZhuO16a} and for further
details and references see the work of~\citet{Beck17a}. DA is due
to~\cite{Nesterov09} and is motivated by the dissatisfying fact that
the convergence of classical subgradient descent methods rely on
decaying step sizes which ultimately give less weight to new
information. This algorithm was later extended to regularized problems
by~\cite{Xiao}. DA is closely related to the
\emph{follow-the-regularized-leader} (FTRL)
algorithm~\citep{Shalev-Shwartz12}. See also the works from
\citet{Bubeck15}, \citet{Hazan16}, and \citet{McMahan17} for complete
descriptions and further references. OMD and DA also grew a lot in
popularity due to applications in online learning
problems~\citep{KakadeST12a,AudibertBL14a}, and the fact that they
generalize a wide range of online learning
algorithms~\citep{Shalev-Shwartz12,McMahan17}. Moreover, OMD and DA have
been used to tackle problems in theoretical computer science such as
graph sparsification~\citep{Allen-ZhuLO15a} and the \(k\)-server
problem~\citep{BubeckCLLM18a}.

\comment{Both MD and DA almost seamlessly fit into the growing field of online
learning setting. In fact, OMD and DA\footnote{The literature may sometimes use
``online mirror descent'' and ``dual averaging'' interchangebly since the latter
is sometimes referred as lazy online mirror descent. However, they are not the
same algorithm and do not ususally yield the same iterates.}  were successfully
used in many online learning problems (e.g.~\cite{KakadeST12a,AudibertBL14a}).
Perhaps more notably, online DA and OMD seem to encompass an enormous amount of
online learning algorithms as special cases with a proper choice of
regularizer/mirror map~\cite{Shalev-Shwartz12,McMahan17}. A very interesting
recent example is~\cite{GuptaKS17a}, in which the authors show how to write
AdaGrad and Online Newton Step, similar algorithms with very different analysis,
as special cases of OMD. One classical algorithm which is encompassed by OMD and
DA is the Multiplicative Weights Update Method (MWUM)~\cite{AroraHK12}, which
has many interesting applications in theoretical computer science (TCS). Some
examples are approximating maximum flow in almost linear
time~\cite{ChristianoKMST11a} and algorithms to solve an interesting class of
semidefinite programs~\cite{AroraK07a}. In fact, online learning has been
finding incredible success in TCS applications, many times through the use of DA
and OMD. As an example, the authors of~\cite{Allen-ZhuLO15a} were able to obtain
a quadratic speedup for the graph spectral sparsification problem by looking at
the matrix version of MWUM as a special case of DA and making a better
regularizer choice. More recently, the authors of~\cite{BubeckCLLM18a} were able
to develop the first algorithm for the \(k\)-server problem with sublinear (in
\(k\)) competitive ratio by using the continuous time version of OMD.
Understanding better the similarities and the differences between OMD and DA can
help guide future applications of both algorithms.}

Unifying views of online learning algorithms have been shown to be
useful for applications and have drawn recent attention.
\citet{McMahan17} showed how to use adaptive regularization in the
FTRL framework to derive many online learning algorithms, and compared
the different ways OMD and FTRL deal with composite functions. \citet{JoulaniGS17} proposed a unified framework to analyze online
learning algorithms under wildly different assumptions, extending even
to the non-convex case. \citet{Juditsky19} recently
proposed a unified framework called unified mirror descent (UMD) that
encompasses OMD and DA  as special cases. In spite of these unifying
frameworks, the differences between OMD and DA seemed to be overlooked
and one might imagine that the algorithms had similar performance in all settings.

Only recently \cite{OrabonaP18} looked more closely at the difference
between OMD and DA with time-varying learning rates.
They presented counter examples to demonstrate that OMD with a dynamic learning rate could
suffer from linear regret even under well-studied settings such as in the
experts' problem, where the algorithm picks points in the simplex and
the adversary picks linear functions whose gradients have
\(\ell_{\infty}\)-norm at most \(1\). Although this may seem to contradict the well-known \(O(\sqrt{T})\) regret bounds for OMD, it does not.
These sub-linear bounds hold if the algorithm knows the time-horizon from the
start or if the Bregman divergence (with respect to the mirror map) on
the feasible set is bounded. However, the  Kullback-Leibler divergence
is \emph{not} bounded on the simplex. In this paper we explain this
phenomenon and show how the addition of stabilization to OMD fixes this
problem.



For the problem of prediction with expert advice, by means of the
\emph{doubling trick}, \citet{CBFHHSW} show an algorithm with a
sublinear \emph{anytime} regret bound, meaning a bound that holds at
each round of the game. Improved anytime regret bounds were developed by
\citet{AuerCG02}, with a simplified description of the latter given
by~\citet[Section 2.3]{game_prediction}. Sublinear anytime regret bounds
can also be derived from the original work on dual averaging in
\cite{Nesterov09}. Regret bounds that depend on the cost of the best
expert (known as the first-order regret bound) can be traced back to the
work by~\citet{CBFHHSW}. Improved first-order regret bounds were given
by~\citet{AuerCG02} and the current best known first-order regret bound
is from a sophisticated algorithm designed by~\citet{YaroshinskyES04}.

\section{Formal definitions}
\label{sec:preliminaries}

We consider the online convex optimization problem with unknown time horizon.
For each time step $t \in \{1,2,\ldots\}$ the algorithm proposes a point $x_t$
from a closed convex set $\X \subseteq \bR^n$ and an adversary
simultaneously picks a convex cost function $f_t$ which the algorithm has access
to by a first order oracle, that is, for any \(x \in \X\) the algorithm can
compute \(f_t(x)\) and a subgradient \(\ghat \in \subdiff[f_t](x) \coloneqq \{\ghat
\in \bR^n \mid f(z) \geq f(x) + \inner{\ghat}{z - x}~\forall z \in \X \}\). We will assume\footnote{ This holds, for example, if $f$ is finite and convex on an open superset of $\cX$ \cite[Theorem 23.4]{roc70}.} that all cost functions \(f_t\) in this text are subdifferentiable on \(\cX\), that is, meaning that the subdifferential $\partial f(x)$ is non-empty for all $x \in \cX$.

The cost of the iteration at time $t$ is defined as $f_t(x_t)$. In this setting the goal is to produce a
sequence of proposals $\{x_t\}_{t \geq 1}$ that minimizes the \emph{regret} against an unknown comparison point $z \in \X$ that has accrued up
until time $T$:
\begin{equation*}
    \Reg(T, z)
    ~\coloneqq~
    \sum_{t=1}^T f_t(x_t) - \sum_{t=1}^T f_t(z).
\end{equation*}
In this paper we are interested in the case where the algorithm does not
know the time-horizon \(T\) in advance. This implies that our choices
of parameters, including learning rates, cannot depend on \(T\). 


Often our results and proofs will make use of the  \emph{dual
norm} of \(\norm{\cdot}\), defined by
\[
  \|z\|_* = \sup \left\{ \langle z, x \rangle \mid x \in \bR^n, \|x\| \leq 1 \right\} .
\]
Both dual averaging and online mirror descent are parameterized by a
special convex function~\(\Phi\), often referred as a regularizer or a
mirror map (for \(\cD\) and \(\X\)), which among other properties needs\footnote{One
may relax this condition in some cases.
\citet[\S~5.2]{introduction-online-optimization} discusses in depth the
conditions needed on the mirror map.} to be of Legendre
type~\citep[Chapter 26]{roc70}. Formally, throughout the paper we
assume that the function \(\Phi \colon \bar{\cD} \to \bR\) is a closed
convex function such that \(\interior \bar{\cD} \cap \relint \X \neq
\emptyset\) (where \(\relint \X\) denotes the relative interior of
\(\X\)), and whose conjugate is differentiable on \(\bR^n\). Moreover,
we also suppose that \(\Phi\) is of Legendre type, which means that
\(\Phi\) is strictly convex on its domain\footnote{In fact we only need
\(\Phi\) to be strictly convex on some convex subsets of the
domain~\citep[Chapter~26]{roc70}, but for the sake of simplicity we
assume that \(\Phi\) is strictly convex on its entire domain.} and
essentially smooth, that is, for \(\cD \coloneqq \interior \bar{\cD}\)
we have
\begin{itemize}
    \item \(\cD\) is nonempty,
    \item \(\Phi\) is differentiable on \(\cD\), and
    \item \(\lim_{x \to \partial \cD} \norm{\nabla \Phi(x)} =
    +\infty\), where \(\partial \cD\) is the boundary of \(\cD\),
    i.e., \(\partial \cD
    \coloneqq 
    \cl \cD \setminus \cD\).
\end{itemize}

The gradient of the mirror map $\nabla \Phi : \cD \rightarrow \bR^n$
and the gradient of its conjugate $\nabla \Phi^* : \bR^n \rightarrow
\cD$ are mutually inverse bijections between the primal space $\cD$
and the dual space $\bR^n$. We will adopt the following notational
convention. Any vector in the primal space will be written without a
hat, such as $x \in \cD$. The same letter with a hat, namely
$\hat{x}$, will denote the corresponding dual vector:
\begin{equation}
\EquationName{dualvec_notation}
\hat{x} \coloneqq \nabla\Phi(x)
\qquad\text{and}\qquad
x \coloneqq \nabla\Phi^*(\hat{x})
\qquad\text{for all letters $x$}.
\end{equation}

Essential smoothness ensures not only that \(\Phi\) is differentiable on
the interior of its domain, but also that the slope of \(\Phi\)
increases arbitrarily fast near the boundary of its domain. The latter
guarantees, at least intuitively, that the function is increasing
near and in the direction of the boundary of its domain. This property
is fundamental for mirror descent to be well-defined (although not
essential for dual averaging) since it ensures that the Bregman
projection onto \(\X\) is attained by a point on \(\cD\) where \(\Phi\)
is differentiable (we give more details on this on
Section~\ref{sec:comparison}), and uniqueness is a consequence of the
strict convexity of~\(\Phi\). Some mirror maps we shall look at are
classical cases of the OCO literature such as the negative entropy \(x
\in \bR_{+}^n \mapsto \sum_{i = 1}^n x_i \ln x_i\) and the squared
2-norm \(\norm{\cdot}_2^2\), and details on the reasons they are
mirror maps can be found in the works of~\cite{Shalev-Shwartz12,
introduction-online-optimization} and \citet{Bubeck15} (in particular,
\citealp[Section~5.2]{introduction-online-optimization} discusses the
properties of functions of Legendre type and why requiring the conjugate
of the mirror map to be differentiable on the whole space is not
necessary for mirror descent to be well-defined if one restricts the
gradient steps in the dual space in some way).



Given a mirror map $\Phi$, the Bregman divergence with respect to $\Phi$ is defined as
\begin{equation}
\EquationName{BregmanDef}
    D_\Phi(x,y) ~\coloneqq~ \Phi(x) - \Phi(y) - \inner{ \nabla \Phi(y) }{ x - y },
    \qquad \forall x \in \bar{\cD}, \forall y \in \cD.
\end{equation}
Throughout this paper it will be convenient to use the notation
\begin{equation}
\EquationName{TriBreg}
\TriBreg{a}{b}{c}
 ~\coloneqq~ D_\Phi(a,c) - D_\Phi(b,c)
 ~=~ \Phi(a)-\Phi(b) - \inner{ \nabla \Phi(c) }{ a-b }.
\end{equation}
In the important special case where $\Phi(x)=\frac12\|x\|^2_2$, the Bregman divergence relates to the Euclidean distance, i.e., $D_\Phi(x,y)=\frac12\|x-y\|^2_2$. When $\Phi(x)=\sum_{i=1}^n x_i\log x_i$, the Bregman divergence becomes the generalized Kullback-Leibler (KL) divergence.
The projection operator induced by the Bregman divergence is $\Pi^{\Phi}_{\X}$ given by $ \{\Pi^{\Phi}_{\X}\} (y)\coloneqq {\arg \min} \{D_\Phi(x,y) \mid x \in \X \}$ for any \(y \in \X \cap \D\).

A general template for optimization in the mirror descent framework is
shown in Algorithm~\ref{alg:aomd}. The two classical algorithms, online
mirror descent and dual averaging, are incarnations of this, differing
only in how the dual variable $\hat y_t$ is updated.

In this work, for a given initial point $x_1 \in \cX$ of the player, we are
interested in the case when $\sup_{z\in \cX} D_\Phi( z, x_1)$ is bounded, which
still allows $\sup_{z,x \in \cX} D_\Phi( z,x ) $ to be unbounded. In fact, in
the Euclidean setting (i.e., $\Phi = \frac{1}{2}\norm{\cdot}_2^2$), $\sup_{z\in
\cX} D_\Phi( z, x_1 ) $ is bounded if and only if the diameter of $\cX$ given by
$\sup_{x,y \in \cX} \frac{1}{2} \|x-y\|_2^2$ is also bounded. However, for a
general mirror map $\Phi$, assuming $\sup_{z\in \cX} D_\Phi( z, x_1 ) $ is
bounded is strictly weaker than assuming $\sup_{x, y\in \cX} D_\Phi( x, y ) $ is
bounded. This is the case for the well-known experts problem, where $\Phi$
is the negative entropy, $D_{\Phi}$ is the KL-divergence, $\cX$ is the unit
simplex, and $x_1 \coloneqq \frac{1}{n} \vec{1}$, where $\vec{1}$ is the
vector in $\R^n$ with entries all set to 1. In this case, we have $\sup_{z\in
\cX} D_\Phi( z, x_1 ) \leq \ln n $ while $\sup_{z, x\in \cX} D_\Phi( z, x ) =
+\infty$.

\begin{algorithm}[t] \caption{Pseudocode for both online mirror descent and dual
    averaging with adaptive learning rate given by $\eta_t$ on iteration \(t\). These methods differ only in how the iterate
    $\hat{y}_{t+1}$ is updated. 
    }
    \AlgorithmName{aomd}
 \begin{algorithmic}
    \STATE {\bfseries Input:} ${x}_1 \in \X \cap \D, \eta:\mathbb{N} \rightarrow
    \R_{>0} $.
    \FOR{$t=1,2,\ldots$}
         \STATE Incur cost $f_t(x_t)$ and receive $\ghat_t \in \partial f_t(x_t)$
         \STATE $\hat{x}_t = \nabla \Phi(x_t)$
         \STATE [\textbf{OMD update}] $\hat{y}_{t+1} = \hat{x}_t - \eta_t \ghat_t$
         \STATE [\textbf{DA update}]~~~ $\hat{y}_{t+1} = \hat{x}_1 - \eta_t \sum_{i \leq t} \ghat_i$
         \STATE ${y}_{t+1} = \nabla \Phi^*( \hat{y}_{t+1} )$
         \STATE $x_{t+1} = \Pi^{\Phi}_{\X} (y_{t+1}) $
    \ENDFOR
 \end{algorithmic}
 \end{algorithm}

 To conclude, throughout the paper we shall try to stick to the
 following naming convention: greek letters will denote scalars,
 lower-case roman letters will denote vectors, and capital caligraphic
 letters shall denote sets. Any deviations from this convention (or the hat notation as in~\eqref{eq:dualvec_notation}) are intended to follow other conventions in the literature. 
 
\section{The relationship between OMD and DA}

In this section, we present a detailed review of some known properties of OMD and DA. Our goal is to summarize known similarities and differences between the guarantees on the regret for these algorithms in the fixed-time and anytime settings.

\subsection{OMD and DA with constant learning rate}

When the time horizon $T$ is known in advance, a constant learning rate that depends on $T$ can be adopted in many algorithms for OCO to achieve sublinear regret. In particular, OMD and DA with the same fixed learning rate enjoy exactly the same regret bound.
\begin{theorem}[\protect{\citealp[Thm.~1]{Nesterov09}, \citealp[Thm.~5.6]{Hazan16}}]
\label{thm:bound_constant_rate}
Suppose that $\Phi$ is $\rho$-strongly convex with respect to a norm $\| \cdot
\|$ and pick a constant learning rate $\eta_t \coloneqq \eta > 0$ for all \(t
\geq 1\). Let $\{x_t\}_{t \geq 1}$ be the sequence of iterates generated by
Algorithm~\ref{alg:aomd}. Then for any sequence of convex functions $\{f_t\}_{t
\geq 1}$ with $f_t \colon \X \rightarrow \R$ for each \(t \geq 1\), the following bound holds for both
OMD and DA updates,
\begin{equation}
\label{eq:ClassicalRegretBound}
  \mathrm{Regret}(T, z) ~\leq~ \sum_{t=1}^T \frac{\eta
    \|\ghat_t\|_*^2}{2\rho} + \frac{ D_\Phi(z, x_1) }{\eta},
\end{equation}
for any comparison point $z \in \cX$.
\end{theorem}
Interestingly, though OMD and DA with constant learning rate have similar regret
bounds, the proofs used to derive these bound tend to be quite different.

\subsection{OMD and DA with dynamic learning rate}

In the unknown time horizon scenario, a dynamic learning rate with $\eta_t
\propto 1/\sqrt{t}$ is usually adopted in the literature of online
learning~\citep{BeckT03,Zinkevich03}. Moreover, when the Bregman divergence (with respect to $\Phi$) on the domain $\X$ is bounded,
both OMD and DA with learning rate $\eta_t \propto 1/\sqrt{t}$ can achieve
$O(\sqrt{T})$ regret bounds (with differing constants). However, when the Bregman divergence on
$\X$ is unbounded, OMD is provably worse than DA as the next theorem shows.

\begin{theorem}[{\protect Linear regret for OMD, \citealp[Thm.~3]{OrabonaP18}}]
\label{thm:counter_ogd}
Set $\eta_t \coloneqq 1/\sqrt{t}$ for each \(t \geq 1\). Let $\{x_t\}_{t \geq 1}$ denote the sequence of
iterates generated by Algorithm~\ref{alg:aomd} with OMD update.
There exists a sequence of convex $1$-Lipschitz continuous functions $\{f_t\}_{t=1}^T$ and an initial point $x_1\in\X$ such that
\[
    D_\Phi(z,x_1)~~\text{is bounded}
    \quad\text{and}\quad
    \mathrm{Regret}(T, z) \:=\: \Omega( T ).
\]
In contrast, Algorithm~\ref{alg:aomd} with the DA update can always guarantee sublinear
regret bound $O(\sqrt{T})$ using a similar learning rates (which differ only by
constants).
\end{theorem}

Moreover, folklore examples show that for offline 1-dimensional gradient descent (i.e., mirror descent with Euclidean regularization), 
a learning rate of either $o(1/\sqrt{t})$ or $\omega(1/\sqrt{t})$ cannot
achieve regret $O(\sqrt{t})$ for all $t > 0$.
Therefore OMD with learning rates
of the form $t^{-\alpha}$ with $\alpha > 0$ may not have optimal regret
guarantees when the Bregman divergence on \(\X\) is unbounded. A natural question is if we can improve OMD
to make it provably work with dynamic learning rates. In the next section we
provide a fix for adaptive OMD through stabilization and later we show its
connection to adaptive DA.

 \section{Stabilized OMD}
 \SectionName{StabilizedOMD}

 As shown in Theorem~\ref{thm:counter_ogd}, \citet{OrabonaP18} proved that OMD with the standard dynamic learning
 rate ($\eta_t \propto 1/\sqrt{t}$) can incur regret linear in $T$ when the feasible set $\X$ is unbounded Bregman divergence, that is, $\sup_{x,z \in \X} D_\Phi(z,x) = \infty$. We introduce a stabilization technique that resolves
 this problem, allowing OMD to support a dynamic learning rate and
 perform similarly to DA even when the Bregman divergence on \(\X\) is unbounded.
 
 The intuition for the idea is as follows. Suppose \(\cZ \subseteq \X\)
 is a set of comparison points with respect to which we wish our
 algorithm to have low regret. Usually, we assume $\sup_{z \in \cZ}
 D_\Phi(z,x_1)$ is bounded, that is, the initial point is not too far
 (with respect to the Bregman divergence) from any comparison point.
 Since $\sup_{z
 \in
 \cZ} D_\Phi(z,x_1)$ is bounded (but not necessarily $\sup_{z \in \cZ,x \in \cX}
 D_\Phi(z,x)$), the point $x_1$ is the only point in $\X$ that is known
 to be somewhat close (w.r.t.\ the Bregman divergence) to all the other
 points in \(\X\). Thus, iterates computed by the algorithm should remain
 reasonably close to $x_1$ so that no other point \(z \in \cZ\) is too
 far from the iterates. If there were such a point \(z\), an adversary
 could later chose functions so that picking \(z\) in every round would
 incur low loss. At the same time, OMD would take many iterations to
 converge to \(z\) since consecutive OMD iterates tend to be close
 w.r.t.\ the Bregman divergence. That is, the algorithm would have high
 regret against~\(z\). To prevent this, the stabilization technique
 modifies each iterate $x_t$ to mix in a small fraction of $x_1$. This
 idea is not entirely new: it appears, for example, in the original Exp3
 algorithm~\citep{ACBFS02}, although for different reasons.
 
 There are two ways to realize the stabilization idea.
 
 \begin{itemize}
    \item \textbf{Primal Stabilization.} 
    Replace $x_t$ with a convex combination of $x_t$ and $x_1$.
    \item \textbf{Dual Stabilization.} 
    Replace $\hat{y}_t$ with a convex combination of $\hat{y}_t$ and $\hat{x}_1$
    (Recall from Algorithm~\ref{alg:aomd} that $\hat{y}_t$ is the dual iterate computed by taking a gradient step). An illustration for dual stabilization is shown in Figure~\ref{fig:illustration}.
 \end{itemize}

 \begin{figure}
 \centering
     \includegraphics[width=0.65\columnwidth]{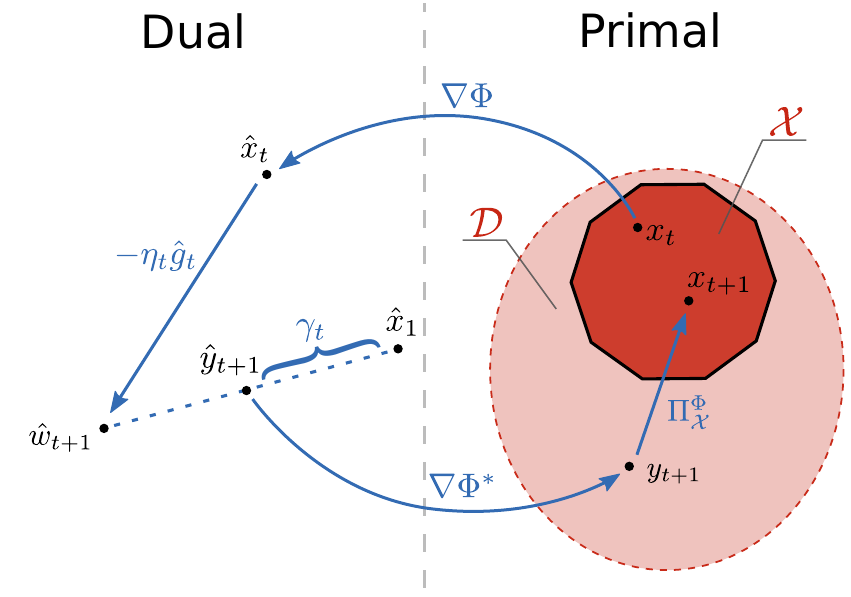} 
     \caption{Illustration of the \(t\)-th iteration of DS-OMD. }
     \label{fig:illustration}
 \end{figure}

 After early versions of the paper were available, we were informed that ideas similar to primal stabilization had appeared in the Robust Optimistic Mirror Descent algorithm~\citep{KangarshahiHSC18} and in the Twisted Mirror Descent(TMD) algorithm~\citep{GyorgyS16a}.
 The setting of the former is different since they perform optimistic steps and their results are somewhat weaker in terms of constant factors and since they cannot handle Bregman projections.  In the latter case, TMD is a meta-algorithm that has primal stabilization as a special case. But the authors only use primal stabilization for the experts' problem~\citep[Example~6]{GyorgyS16a}. In our work we extend the idea of primal stabilization for cases beyond the setting of prediction with expert advice.

 \subsection{Dual-stabilized OMD}
 \SectionName{generic-ds-omd}
 
 Algorithm~\ref{alg:somd} gives pseudocode showing our modification of OMD to incorporate dual stabilization.
 \Theorem{somd} analyzes it without assuming strong convexity of $\Phi$.
 
 \begin{algorithm}[t]
    \caption{Dual-stabilized OMD (DS-OMD). The parameters $\gamma_t$ control the
    amount of stabilization.}
    \label{alg:somd}
    \begin{algorithmic}
      \STATE {\bfseries Input:}
      ${x}_1 \in \X \cap \cD,
      \eta:\mathbb{N} \rightarrow \R_+,\
      \gamma : \mathbb{N} \to (0, 1] $
      \FOR{$t=1,2,\ldots$}
         \STATE Incur cost $f_t(x_t)$ and receive $\ghat_t \in \partial f_t(x_t)$
         \STATE \vspace{-21pt} \addtolength{\jot}{-3pt}
             \begin{flalign}
             & \hat{x}_t = \nabla \Phi(x_t)&&
              \text{$\rhd$  map primal iterate to dual space}
              \hspace{3cm}
                \nonumber
             \\
             & \hat{w}_{t+1} = \hat{x}_t - \eta_t \ghat_t
             &&
              \text{$\rhd$  gradient step in dual space}
             \EquationName{GradientStep} \\
             & \hat{y}_{t+1} = \gamma_t \hat{w}_{t+1} + (1-\gamma_t) \hat{x}_1
             &&
             \text{$\rhd$ stabilization in dual space} 
             \EquationName{DSOMDHyp}\\
             & y_{t+1} = \nabla \Phi^*( \hat{y}_{t+1} )
             &&
             \text{$\rhd$ map dual iterate to primal space} 
             \nonumber\\
             & x_{t+1} = \Pi^{\Phi}_{\X}(y_{t+1}) \EquationName{DSOMDHyp2}
             &&
             \text{$\rhd$  project onto feasible region} 
             \end{flalign}
             \vspace{-21pt}
    \ENDFOR
 \end{algorithmic}
 \end{algorithm}
 
 \begin{theorem}[Regret bound for dual-stabilized OMD] \TheoremName{somd}
Assume we have $\eta_t \geq \eta_{t+1} > 0$ for each~$t > 1$. Define $\gamma_t = \eta_{t+1}/\eta_t \in
 (0,1]$ for all $t \geq 1$.  Let $\{f_t\}_{t \geq 1}$ be a sequence of
 convex functions with $f_t \colon \X \rightarrow \R$ for each \(t \geq
 1\). Let $\{x_t\}_{t \geq 1}$ and $\{\what_t\}_{t \geq 2}$ be as in
 Algorithm~\ref{alg:somd}. Then, for all \(T > 0\) and \(z \in \cX\),
 \begin{equation}
 \EquationName{RegretBound}
   \Reg(T,z) ~\leq~ \sum_{t=1}^T \frac{\TriBreg{x_t}{x_{t+1}}{w_{t+1}}}{\eta_t} +  \frac{ D_\Phi(z, x_1) }{\eta_{T+1}}.
 \end{equation}
 \end{theorem}
 
 Note that strong convexity of $\Phi$ is \emph{not} assumed. As we will
 see in \Section{strongly-convex-ds-omd}, the term
 $\TriBreg{x_t}{x_{t+1}}{w_{t+1}}$ can be easily bounded by the dual norm of \(\ghat_t\) when the mirror
 map is strongly convex. This yields sublinear regret for $\eta_t
 \propto 1/\sqrt{t}$, which is not the case for OMD when \(\sup_{z \in
 \cZ,x \in \X} D_{\Phi}(z,x) = +\infty\), where \(\cZ \subseteq \cX\) is
 a fixed set of comparison points.
 
 \comment{An advantage of dual-stabilized OMD over dual averaging is, in
 our opinion, the transparent nature of our analysis. Our proof of
 Theorem~\ref{thm:somd} is very different from Nesterov's \citep[Theorem
 1]{Nesterov09} and requires only a few modifications to the classical
 analysis of OMD.}

 \begin{proofof}{\Theorem{somd}}\mbox{}

\begin{mdframed}
Let $z \in \X$. The first step is the same as in the standard OMD proof.
\begin{align}
    f_t(x_t) - f_t(z) &~\leq~ \inner{ \ghat_t }{ x_t - z } \nonumber 
    \qquad
    &&\text{(subgradient ineq.)}
    \\
    &~=~ \frac{1}{\eta_t} \inner{ \hat{x}_t - \hat{w}_{t+1}}{ x_t - z }
    \qquad
    &&\text{(by \eqref{eq:GradientStep})}
    \nonumber \\
    &~=~ \frac{1}{\eta_t} \big(D_\Phi(x_t,w_{t+1}) - D_\Phi(z,w_{t+1}) + D_\Phi(z,x_t) \big)
    \quad
    &&\text{(by Prop.~\ref{prop:BregmanTriangle})}
    .  \EquationName{regret_at_each_round}
\end{align}
\end{mdframed}

The next step exhibits the main point of stabilization. Without stabilization we
would have $x_{t+1} = \Pi^\Phi_\cX(w_{t+1})$ and \(D_{\Phi}(z, w_{t+1}) \geq
D_{\Phi}(z, x_{t+1}) + D_{\Phi}(x_{t+1}, w_{t+1})\) by
\Proposition{BregmanProjection}, so \eqref{eq:regret_at_each_round} would lead
to a telescoping sum involving $D_\Phi(z,\cdot)$ \emph{if the learning rate were
fixed}. With a dynamic learning rate the analysis is trickier: we need a claim
that leads to telescoping terms by relating $D_\Phi(z, w_{t+1} )$ to
$D_\Phi(z,x_{t+1})$.

\begin{claim}
\ClaimName{NewDSOMD}
Assume that $\gamma_t = \eta_{t+1}/\eta_t \in (0,1]$.
Then
\[
\eqref{eq:regret_at_each_round}
~\leq~
\frac{\TriBreg{x_t}{x_{t+1}}{w_{t+1}}}{\eta_t} + \underbrace{\Big( \frac{1}{\eta_{t+1}} - \frac{1}{\eta_t} \Big)}_{\text{telescopes}} D_\Phi(z, x_1) +
\underbrace{\frac{D_\Phi(z, x_t)}{\eta_t} -\frac{D_\Phi(z, x_{t+1})}{\eta_{t+1}}}_{\text{telescopes}}.
\]
\end{claim}
\begin{proof}
First we derive the inequality
\begin{alignat*}{3}
&           \gamma_t  \big( D_\Phi(z,w_{t+1}) - D_\Phi(x_{t+1},w_{t+1}) \big)
    \:+\: (1-\gamma_t) &&D_\Phi(z,x_1) \\
~\geq~&     \gamma_t  \TriBreg{z}{x_{t+1}}{w_{t+1}}
    \:+\: (1-\gamma_t) \TriBreg{z}{x_{t+1}}{x_1}
    &&\text{(since $D_\Phi(x_{t+1},x_1) \geq 0$ and $\gamma_t \leq 1$)}\\
~=~&        \TriBreg{z}{x_{t+1}}{y_{t+1}}
    &&\text{(by \Proposition{BregmanIdentity} and \eqref{eq:DSOMDHyp})}\\
~\geq~&     D_\Phi(z,x_{t+1})
    &&\text{(by \Proposition{BregmanProjection} and \eqref{eq:DSOMDHyp2})}.
\end{alignat*}
Rearranging and using $\gamma_t>0$ yields
\begin{equation}\EquationName{DSOMDRearranged}
D_\Phi(z, w_{t+1})
    ~\geq~ 
        D_\Phi(x_{t+1}, w_{t+1}) 
        - \Big(\frac{1}{\gamma_t}-1\Big) D_\Phi(z, x_1)
        + \frac{1}{\gamma_t} D_\Phi( z, x_{t+1} ).
\end{equation}
Plugging this into \eqref{eq:regret_at_each_round} yields
\begin{align*}
\eqref{eq:regret_at_each_round}
    &~=~ \frac{1}{\eta_t} \Big(
            D_\Phi(x_t,w_{t+1}) - D_\Phi(z,w_{t+1}) + D_\Phi(z,x_t)
        \Big) \\
    &
    \begin{aligned}
        ~\leq~ \frac{1}{\eta_t} \Bigg(
            &D_\Phi(x_t,w_{t+1}) 
            - D_\Phi(x_{t+1}, w_{t+1}) 
            ~+ 
            \\
            &\Big(\frac{1}{\gamma_t}-1\Big) D_\Phi(z, x_1)
            - \frac{1}{\gamma_t} D_\Phi( z, x_{t+1} )
            + D_\Phi(z,x_t)
        \Bigg).
    \end{aligned}
    \qquad \text{(by \eqref{eq:DSOMDRearranged})}
\end{align*}
The claim follows by the definition of $\gamma_t$.
\end{proof}

\begin{mdframed}
The final step is very similar to the standard OMD proof. 
Summing \eqref{eq:regret_at_each_round} over $t$
and using \Claim{NewDSOMD} leads to the desired telescoping sum.
\begin{align*}
&\sum_{t=1}^T \big( f_t(x_t) - f_t(z) \big)\\
    ~\leq~& \sum_{t=1}^T \Bigg(\frac{
    \TriBreg{x_t}{x_{t+1}}{w_{t+1}}}{\eta_t} + \Big( \frac{1}{\eta_{t+1}} - \frac{1}{\eta_t} \Big) D_\Phi(z, x_1) + \frac{D_\Phi(z, x_t)}{\eta_t} -\frac{D_\Phi(z, x_{t+1})}{\eta_{t+1}} \Bigg) \nonumber
    \\
    ~\leq~& \sum_{t=1}^T \frac{\TriBreg{x_t}{x_{t+1}}{w_{t+1}}}{\eta_t}
      \:+\: \left( \frac{1}{\eta_1} + \sum_{t=1}^T \left( \frac{1}{\eta_{t+1}} - \frac{1}{\eta_t} \right)  \right){D_\Phi(z, x_1)}  \nonumber
    \\
    ~=~& \sum_{t=1}^T \frac{\TriBreg{x_t}{x_{t+1}}{w_{t+1}}}{\eta_t} \:+\: \frac{ D_\Phi(z, x_1) }{\eta_{T+1}}.
\end{align*}
\end{mdframed}
\end{proofof}

\subsection{Primal-stabilized OMD}

\Algorithm{ps_omd} gives pseudocode showing our modification of OMD to incorporate primal stabilization. Interestingly, as we shall see in the next theorem, we need convexity of \(D_{\Phi}(z,\cdot)\) for \(z \in \X\) to get interesting bounds on the regret when using primal stabilization.

\begin{algorithm}[H]
   \caption{Online mirror descent with primal stabilization.}
   \AlgorithmName{ps_omd}
\begin{algorithmic}
   \STATE {\bfseries Input:} $x_1 \in \cX \cap \cD ,\, 
   \eta:\mathbb{N} \rightarrow \R ,\,
   \gamma : \mathbb{N} \rightarrow \R $.
   \FOR{$t=1,2,\ldots$}
        \STATE Incur cost $f_t(x_t)$ and receive $\ghat_t \!\in\! \partial f_t(x_t)$
        \STATE \vspace{-21pt} \addtolength{\jot}{-2pt}
            \begin{flalign}
            & \hat{x}_t = \nabla \Phi(x_t) &&\text{$\rhd$ map primal iterate to dual space} \nonumber\\
            & \hat{w}_{t+1} = \hat{x}_t - \eta_t \ghat_t &&\text{$\rhd$ gradient step in dual space} \EquationName{GradientStep2}\\
            & w_{t+1} = \nabla \Phi^*(\hat{w}_{t+1}) &&\text{$\rhd$ map dual iterate to primal space}
            \EquationName{PSOMDHyp3}\\
            & y_{t+1} = \Pi^{\Phi}_{\X} (w_{t+1}) &&\text{$\rhd$ project onto feasible region}\hspace{3cm} \EquationName{PSOMDHyp2}\\
            & {x}_{t+1} = \gamma_t {y}_{t+1} + (1-\gamma_t) x_1 &&\text{$\rhd$ stabilization in primal space} \EquationName{PSOMDHyp}
            \end{flalign}
            \vspace{-21pt}
   \ENDFOR
\end{algorithmic}
\end{algorithm}

\begin{theorem}[Regret bound for primal-stabilized OMD]
\label{thm:psomd}
Assume $\eta_t \geq \eta_{t+1} > 0$ for each $t > 1$.
Define $\gamma_t = \eta_{t+1}/\eta_t \in (0,1]$ for all $t \geq 1$.
Let $\{x_t\}_{t \geq 1}$ be the sequence of iterates generated by \Algorithm{ps_omd}.
Furthermore, assume that
\begin{equation}
    \EquationName{PSOMDConvexity}
\text{for all $z \in \cX$, the map $x \mapsto D_\Phi(z,x)$ is convex on $\cX$.}
\end{equation}
Then for any sequence of convex
functions $\{f_t\}_{t \geq 1}$ with each $f_t : \X \rightarrow \R$,
\begin{equation}
\EquationName{PSRegretBound}
  \mathrm{Regret}(T, z) ~\leq~ \sum_{t=1}^T \frac{\TriBreg{x_t}{y_{t+1}}{w_{t+1}}}{\eta_t} + \frac{ D_\Phi(z, x_1) }{\eta_{T+1}}
  \quad \forall T > 0.
\end{equation}
\end{theorem}

 \begin{proofof}{\Theorem{psomd}}\mbox{}

Let \(z \in \X\). The first step is identical to the proof of \Theorem{somd} since the
update rule in \eqref{eq:GradientStep2} is exactly the same as
\eqref{eq:GradientStep}. Therefore, we have that
\eqref{eq:regret_at_each_round} holds, that is, 
\begin{align*}
    f_t(x_t) - f_t(z) ~\leq~ \frac{1}{\eta_t} \big(D_\Phi(x_t,w_{t+1}) - D_\Phi(z,w_{t+1}) + D_\Phi(z,x_t) \big).
\end{align*}

Now instead of \Claim{NewDSOMD} we use the next claim, which is similar to 
 \Claim{NewDSOMD} but replaces
$\TriBreg{x_t}{x_{t+1}}{w_{t+1}}$ with
$\TriBreg{x_t}{y_{t+1}}{w_{t+1}}$.

\begin{claim}
\ClaimName{NewPSOMD}
Assume that $\gamma_t = \eta_{t+1}/\eta_t \in (0,1]$.
Then
\[
\eqref{eq:regret_at_each_round}
~\leq~
\frac{\TriBreg{x_t}{y_{t+1}}{w_{t+1}}}{\eta_t} + \underbrace{\Big( \frac{1}{\eta_{t+1}} - \frac{1}{\eta_t} \Big)}_{\text{telescopes}}
D_\Phi(z, x_1) +
\underbrace{\frac{D_\Phi(z, x_t)}{\eta_t} -\frac{D_\Phi(z, x_{t+1})}{\eta_{t+1}}}_{\text{telescopes}}.
\]
\end{claim}
\begin{proof}
First, we derive the inequality
\begin{align*}
        &  \gamma_t   \big( D_\Phi(z,w_{t+1}) - D_\Phi(y_{t+1},w_{t+1}) \big)
   \:+\: (1-\gamma_t) D_\Phi(z,x_1) \\
~=~&    \gamma_t   \TriBreg{z}{y_{t+1}}{w_{t+1}}
   \:+\: (1-\gamma_t) D_\Phi(z,x_1) \\
~\geq~&     
    \gamma_t D_\Phi(z,y_{t+1}) + (1-\gamma_t) D_\Phi(z,x_1)
    &&\qquad\text{(by Prop.~\ref{prop:BregmanProjection} and \eqref{eq:PSOMDHyp2})}\\
~\geq~&     D_\Phi(z,x_{t+1})
    &&\qquad\text{(by \eqref{eq:PSOMDHyp} and \eqref{eq:PSOMDConvexity})}.
\end{align*}
Rearranging and using $\gamma_t>0$ yields
\begin{equation}\EquationName{PSOMDRearranged}
D_\Phi(z, w_{t+1})
    ~\geq~ 
        D_\Phi(y_{t+1}, w_{t+1}) 
        - \Big(\frac{1}{\gamma_t}-1\Big) D_\Phi(z, x_1)
        + \frac{1}{\gamma_t} D_\Phi( z, x_{t+1} ).
\end{equation}
Plugging this into \eqref{eq:regret_at_each_round} yields
\begin{align*}
\eqref{eq:regret_at_each_round}
    &~=~ \frac{1}{\eta_t} \Big(
            D_\Phi(x_t,w_{t+1}) - D_\Phi(z,w_{t+1}) + D_\Phi(z,x_t)
        \Big) \\
    &
    \begin{aligned}
        ~\leq~    \frac{1}{\eta_t} \Bigg(
            &D_\Phi(x_t,w_{t+1}) 
            - D_\Phi(y_{t+1}, w_{t+1}) 
            ~+ \\
            &~ \Big(\frac{1}{\gamma_t}-1\Big) D_\Phi(z, x_1) - \frac{1}{\gamma_t} D_\Phi( z, x_{t+1} )
            + D_\Phi(z,x_t)
        \Bigg).
    \end{aligned}
    \qquad \text{(by \eqref{eq:PSOMDRearranged})}
\end{align*}
The claim follows by the definition of $\gamma_t$.
\end{proof}

\begin{mdframed}
The final step is very similar to the proof of \Theorem{somd}. The only difference is that we are using \Claim{NewPSOMD} instead of \Claim{NewDSOMD} and we replace $\TriBreg{x_t}{x_{t+1}}{w_{t+1}}$ with $\TriBreg{x_t}{y_{t+1}}{w_{t+1}}$. Formally, summing \eqref{eq:regret_at_each_round} over $t$ and using \Claim{NewPSOMD} leads to the desired telescoping sum, that is,
\begin{align*}
&\sum_{t=1}^T \big( f_t(x_t) - f_t(z) \big)\\
    ~\leq~& \sum_{t=1}^T \Bigg(\frac{
    \TriBreg{x_t}{y_{t+1}}{w_{t+1}}}{\eta_t} + \Big( \frac{1}{\eta_{t+1}} - \frac{1}{\eta_t} \Big) D_\Phi(z, x_1) + \frac{D_\Phi(z, x_t)}{\eta_t} -\frac{D_\Phi(z, x_{t+1})}{\eta_{t+1}} \Bigg) \nonumber
    \\
    ~\leq~& \sum_{t=1}^T \frac{\TriBreg{x_t}{y_{t+1}}{w_{t+1}}}{\eta_t}
      \:+\: \left( \frac{1}{\eta_1} + \sum_{t=1}^T \left( \frac{1}{\eta_{t+1}} - \frac{1}{\eta_t} \right)  \right){D_\Phi(z, x_1)}  \nonumber
    \\
    ~=~& \sum_{t=1}^T \frac{\TriBreg{x_t}{y_{t+1}}{w_{t+1}}}{\eta_t} \:+\: \frac{ D_\Phi(z, x_1) }{\eta_{T+1}}.
\end{align*}
\end{mdframed}

\end{proofof}

\subsection{Dual averaging}

In this section, we show that Nesterov's dual averaging algorithm can be obtained from a small modification to dual-stabilized online mirror descent.
Furthermore our proof of \Theorem{somd} can be adapted to analyze DA.

The main difference between DS-OMD and dual averaging is in the gradient step, as we now explain.
In iteration $t$ of DS-OMD, the gradient step in \eqref{eq:GradientStep} is taken from $\hat{x}_t$, the dual counterpart of the iterate $x_t$:
\begin{align}\nonumber
\text{DS-OMD gradient step:}\quad
\hat{w}_{t+1} &~=~ \hat{x}_t - \eta_t \ghat_t.
\intertext{
Suppose that the algorithm is modified so that the gradient step is taken from $\hat{y}_t$, the dual point from iteration $t$ \emph{before} projection onto the feasible region. (Here $\hat{y}_1$ is defined to be $\hat{x}_1$.) The resulting gradient step is: 
}
\text{Lazy gradient step:}\quad
\hat{w}_{t+1} &~=~ \hat{y}_t - \eta_t \ghat_t.
\EquationName{Lazy}
\end{align}
As before, we set
\begin{equation}
\EquationName{DAStabilization}
\hat{y}_{t+1} ~=~ \gamma_t \hat{w}_{t+1} + (1-\gamma_t) \hat{x}_1
\end{equation}
where $\gamma_t = \eta_{t+1}/\eta_{t}$.
Then a simple inductive proof yields the following claim.

\begin{claim}
\ClaimName{DAIterative}
$\hat{w}_t = \hat{x}_1 - \eta_{t-1} \sum_{i<t} \ghat_i$ and $\hat{y}_t = \hat{x}_1 - \eta_t \sum_{i<t} \ghat_i$
for all $t>1$.
\end{claim}

Thus, the algorithm with the lazy gradient step can be written as in \Algorithm{DualAveraging}.
This is equivalent to \Algorithm{aomd} with the DA update, except that
$\eta_t$ in \Algorithm{aomd}
corresponds to
$\eta_{t+1}$ in \Algorithm{DualAveraging}. In the next theorem we analyze DA with similar techniques to the ones used in Theorems~\ref{thm:somd} and~\ref{thm:psomd}.

\begin{algorithm}[H]
   \caption{Dual averaging with learning rate re-indexed as $\eta_2, \eta_3, \ldots$}
   \AlgorithmName{DualAveraging}
   \begin{algorithmic}
     \STATE {\bfseries Input:}
     ${x}_1 \in \X \cap \cD,\ 
     \eta:\mathbb{N} \rightarrow \R_+,\
     \gamma : \mathbb{N} \to (0, 1] $
    \STATE $\hat{y}_1 = \nabla \Phi(x_1)$
     \FOR{$t=1,2,\ldots$}
        \STATE Incur cost $f_t(x_t)$ and receive $\ghat_t \in \partial f_t(x_t)$
        \STATE \makebox[2in][l]{$\hat{y}_{t+1} = \hat{x}_1 - \eta_{t+1} \sum_{i \leq t} \ghat_i$}
               $\rhd$ dual averaging update
        \STATE \makebox[2in][l]{${y}_{t+1} = \nabla \Phi^*( \hat{y}_{t+1} )$}
                $\rhd$ map dual iterate to primal space
        \STATE \makebox[2in][l]{$x_{t+1} =
        \Pi^{\Phi}_{\X}(y_{t+1}) $}
                $\rhd$ project onto feasible region
   \ENDFOR
\end{algorithmic}
\end{algorithm}


\comment{
    \vspace{2cm}
    If we reindex so
    that $\eta_t$ becomes $\eta_{t-1}$, then the resulting algorithm
    is equivalent to dual averaging (as formulated in
    Algorithm~\ref{alg:aomd}).  Thus, dual averaging may also be
    viewed as related to stabilization.
}

\begin{theorem}[Regret bound for dual averaging]
\TheoremName{DualAveraging}
Assume we have $\eta_t \geq \eta_{t+1} > 0$ for each~$t > 1$.
Let $\{x_t\}_{t \geq 1}$ be the sequence of iterates generated by \Algorithm{DualAveraging}.
Then for any sequence of convex
functions $\{f_t\}_{t \geq 1}$ with each $f_t : \X \rightarrow \R$,
\begin{equation}
\EquationName{DARegretBound}
  \mathrm{Regret}(T, z) ~\leq~ \sum_{t=1}^T 
  \frac{\TriBreg{x_t}{x_{t+1}}{\Grad\Phi^*(\hat{x}_t - \eta_t \ghat_t)}}{\eta_t}
  + \frac{ D_\Phi(z, x_1) }{\eta_{T+1}}
  \quad \forall T > 0.
\end{equation}
\end{theorem}

\begin{proofof}{\Theorem{DualAveraging}} \mbox{}

\begin{mdframed}
Let $z \in \X$. The first step is very similar to the proof of \Theorem{somd}.
\begin{align}
    f_t(x_t) - f_t(z) &~\leq~ \inner{ \ghat_t }{ x_t - z }
    &&\qquad\text{(subgradient ineq.)}
    \nonumber\\
    &~=~ \frac{1}{\eta_t} \inner{ \hat{y}_t - \hat{w}_{t+1} }{ x_t - z } 
    &&\qquad\text{(by \eqref{eq:Lazy})}\nonumber\\
    &~=~ \frac{1}{\eta_t} \Big( D_\Phi(x_t,w_{t+1}) 
    - D_\Phi(z,w_{t+1}) + \TriBreg{z}{x_t}{y_t} \Big),
    \EquationName{da_regret_at_each_round}
\end{align}
where in the last equation we have used \Proposition{GeneralBregmanTriangle} instead of \Proposition{BregmanTriangle}.
\end{mdframed}

As in the proof of \Theorem{somd}, the next step is to relate $D_\Phi(z,w_{t+1})$ to $D_\Phi(z,y_{t+1})$ so that \eqref{eq:da_regret_at_each_round} can be bounded using a telescoping sum.
The following claim is similar to \Claim{NewDSOMD}.

\begin{claim}
\ClaimName{NewDA}
Assume that $\gamma_t = \eta_{t+1}/\eta_t \in (0,1]$.
Then
\[
\eqref{eq:da_regret_at_each_round}
~\leq~
\frac{\TriBreg{x_t}{x_{t+1}}{w_{t+1}}}{\eta_t} + 
\underbrace{\Big( \frac{1}{\eta_{t+1}} - \frac{1}{\eta_t} \Big)}_{\text{telescopes}}
D_\Phi(z, x_1) +
\underbrace{\frac{\TriBreg{z}{x_t}{y_t}}{\eta_t} -\frac{\TriBreg{z}{x_{t+1}}{y_{t+1}}}{\eta_{t+1}}}_{\text{telescopes}}.
\]
\end{claim}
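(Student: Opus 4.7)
\textbf{Proof plan for \Claim{NewDA}.} The plan is to mirror the structure of \Claim{NewDSOMD}: rewrite \eqref{eq:da_regret_at_each_round} so that the two ``non-telescoping'' ingredients $D_\Phi(x_t, w_{t+1})$ and $-D_\Phi(z, w_{t+1})$ are converted into, respectively, the promised $\TriBreg{x_t}{x_{t+1}}{w_{t+1}}$ term (plus a leftover $D_\Phi(x_{t+1}, w_{t+1})$) and a quantity that telescopes alongside the $\TriBreg{z}{x_t}{y_t}$ already present.

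First I would use the identity $\TriBreg{x_t}{x_{t+1}}{w_{t+1}} = D_\Phi(x_t, w_{t+1}) - D_\Phi(x_{t+1}, w_{t+1})$ to replace $D_\Phi(x_t, w_{t+1})$ inside \eqref{eq:da_regret_at_each_round}. The residual $D_\Phi(x_{t+1}, w_{t+1})$ then merges with $-D_\Phi(z, w_{t+1})$ into $-\TriBreg{z}{x_{t+1}}{w_{t+1}}$, reducing the claim to the single inequality
\[
-\frac{\TriBreg{z}{x_{t+1}}{w_{t+1}}}{\eta_t} ~\leq~ \Big(\frac{1}{\eta_{t+1}} - \frac{1}{\eta_t}\Big) D_\Phi(z, x_1) - \frac{\TriBreg{z}{x_{t+1}}{y_{t+1}}}{\eta_{t+1}}.
\]

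Next I would invoke the dual averaging update rule, which (as in the proof of \Claim{NewDSOMD}) expresses the iterate as a convex combination in the dual, $\hat{y}_{t+1} = \gamma_t \hat{w}_{t+1} + (1 - \gamma_t) \hat{x}_1$, with $\gamma_t = \eta_{t+1}/\eta_t \in (0,1]$. Applying \Proposition{BregmanIdentity} with $u = z$, $v = x_{t+1}$, $a = w_{t+1}$, $b = x_1$, $c = y_{t+1}$ yields
\[
\gamma_t \TriBreg{z}{x_{t+1}}{w_{t+1}} + (1 - \gamma_t)\TriBreg{z}{x_{t+1}}{x_1} = \TriBreg{z}{x_{t+1}}{y_{t+1}}.
\]
Using $\TriBreg{z}{x_{t+1}}{x_1} = D_\Phi(z, x_1) - D_\Phi(x_{t+1}, x_1) \leq D_\Phi(z, x_1)$ (non-negativity of Bregman divergence), I would rearrange to lower-bound $\TriBreg{z}{x_{t+1}}{w_{t+1}}$, negate, and divide by $\eta_t$; since $1/(\gamma_t \eta_t) = 1/\eta_{t+1}$ and $(1-\gamma_t)/(\gamma_t \eta_t) = 1/\eta_{t+1} - 1/\eta_t$, the displayed inequality drops out and the claim follows.

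The only real obstacle is conceptual rather than computational: identifying the correct dual convex-combination identity $\hat{y}_{t+1} = \gamma_t \hat{w}_{t+1} + (1-\gamma_t)\hat{x}_1$ induced by the DA update with varying step sizes, and recognizing that the telescoping quantity must be the ``triangle'' Bregman $\TriBreg{z}{x_{t+1}}{y_{t+1}}$ rather than $D_\Phi(z, x_{t+1})$. The latter point is precisely what makes the $\TriBreg{}{}{}$ machinery of \Appendix{TriBrerocg}, together with \Proposition{BregmanIdentity}, the natural tool here (rather than \Proposition{BregmanProjection} alone, as was sufficient for SOMD): DA maintains an unprojected iterate $y_{t+1}$ that generally lies outside $\X$, so we cannot pass through a pure primal projection step. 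Once these two identifications are made, the argument is a few lines of bookkeeping.
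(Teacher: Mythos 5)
Your proof is correct and follows essentially the same route as the paper's: both apply \Proposition{BregmanIdentity} to the dual convex combination $\hat{y}_{t+1} = \gamma_t \hat{w}_{t+1} + (1-\gamma_t)\hat{x}_1$, discard $D_\Phi(x_{t+1},x_1)\geq 0$, and rearrange using $\gamma_t\eta_t = \eta_{t+1}$. The only difference is presentational — you cancel the common terms first and isolate the one inequality to prove, whereas the paper first lower-bounds $D_\Phi(z,w_{t+1})$ and then substitutes — so the two arguments are the same up to reordering.
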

\begin{proof}
The first two steps are identical to the proof of \Claim{NewDSOMD}.
\begin{alignat*}{2}
&         \gamma_t \big(D_\Phi(z,w_{t+1}) - D_\Phi(x_{t+1},w_{t+1})\big)
\:+\: (1-\gamma_t) &&D_\Phi(z,x_1) \\
~\geq~&
         \gamma_t \TriBreg{z}{x_{t+1}}{w_{t+1}}
\:+\: (1-\gamma_t) \TriBreg{z}{x_{t+1}}{x_1}
    &&\text{(since $D_\Phi(x_{t+1},x_1)\geq 0$ and $\gamma_t \leq 1$)}\\
~=~&
\TriBreg{z}{x_{t+1}}{y_{t+1}}
&&\text{(by \Proposition{BregmanIdentity} and \eqref{eq:DAStabilization})}.
\end{alignat*}
Rearranging and using $\gamma_t > 0$ yields
\begin{equation}\EquationName{DARearrange}
D_\Phi(z,w_{t+1})
~\geq~
D_\Phi(x_{t+1},w_{t+1})
\:-\: \Big(\frac{1}{\gamma_t}-1\Big) D_\Phi(z,x_1)
\:+\:
\frac{\TriBreg{z}{x_{t+1}}{y_{t+1}}}{\gamma_t}.
\end{equation}
Plugging this into \eqref{eq:da_regret_at_each_round} yields
\begin{align*}
\eqref{eq:da_regret_at_each_round}
&~=~
\frac{1}{\eta_t} \Big( D_\Phi(x_t,w_{t+1}) 
    - D_\Phi(z,w_{t+1}) + \TriBreg{z}{x_t}{y_t} \Big)
\\
&
\begin{aligned}
    ~\leq~
    \frac{1}{\eta_t} \Bigg( & D_\Phi(x_t,w_{t+1})
    - D_\Phi(x_{t+1},w_{t+1}) + \mbox{}
    \\
     &\Big(\frac{1}{\gamma_t}-1\Big) D_\Phi(z,x_1) 
    - \frac{\TriBreg{z}{x_{t+1}}{y_{t+1}}}{\gamma_t}
    + \TriBreg{z}{x_t}{y_t}
     \Bigg),    
\end{aligned}
\qquad\text{by \eqref{eq:DARearrange}.}
\end{align*}
The claim follows by the definition of $\gamma_t$.
\end{proof}

\begin{mdframed}
Again the final step is very similar to the proof of \Theorem{somd}.
Summing \eqref{eq:da_regret_at_each_round} over $t$ and using \Claim{NewDA} leads to the desired telescoping sum.
\begin{align*}
&\sum_{t=1}^T \big( f_t(x_t) - f_t(z) \big)
\\
    ~\leq~ &\sum_{t=1}^T \Bigg(\frac{\TriBreg{x_t}{x_{t+1}}{w_{t+1}}}{\eta_t} + \Big( \frac{1}{\eta_{t+1}} - \frac{1}{\eta_t} \Big) D_\Phi(z, x_1) + \frac{\TriBreg{z}{x_t}{y_t}}{\eta_t} -\frac{\TriBreg{z}{x_{t+1}}{y_{t+1}}}{\eta_{t+1}} \Bigg)
    \\
    ~\leq~ &\sum_{t=1}^T \frac{\TriBreg{x_t}{x_{t+1}}{w_{t+1}}}{\eta_t}
      \:+\: \left( \frac{1}{\eta_1} + \sum_{t=1}^T \left( \frac{1}{\eta_{t+1}} - \frac{1}{\eta_t} \right)  \right){D_\Phi(z, x_1)} 
    \\
    ~=~ &\sum_{t=1}^T \frac{\TriBreg{x_t}{x_{t+1}}{w_{t+1}}}{\eta_t} \:+\: \frac{ D_\Phi(z, x_1) }{\eta_{T+1}},
\end{align*}
where for the second inequality we have also used that $\TriBreg{z}{x_1}{y_1}=D_\Phi(z,x_1)$ since $x_1=y_1$.
Thus, the above shows that
    \begin{equation}\EquationName{DAAlmostDone}
      \mathrm{Regret}(T, z) ~\leq~ \sum_{t=1}^T \frac{
      \TriBreg{x_t}{x_{t+1}}{w_{t+1}}}{\eta_t} + \frac{ D_\Phi(z, x_1) }{\eta_{T+1}}
      \quad \forall T > 0.
    \end{equation}
\end{mdframed}

Notice that \eqref{eq:DAAlmostDone} is syntactically identical to \eqref{eq:RegretBound};
the only difference is the definition of $w_{t+1}$ in these two settings.
However, in this section the definition of $x_t$ has not yet been used!
Doing so will provide an upper bound on \eqref{eq:DAAlmostDone}, which is the conclusion of this theorem.
To control $\TriBreg{x_t}{x_{t+1}}{w_{t+1}}$, we will apply \Proposition{GeneralBregmanProjection} as follows.
Taking $p = y_t$, $\pi = x_t = \Pi_\cX^\Phi(y_t)$, $v=x_{t+1}$ and $\hat{q} = \eta_t g_t$, we obtain
\begin{align*}
    \TriBreg{x_t}{x_{t+1}}{w_{t+1}}
        &~=~ -\TriBreg{v}{\pi}{\Grad\Phi^*(\hat{p} - \hat{q})}
            \qquad\text{(since $\hat{w}_{t+1} = \hat{y}_t - \eta_t \ghat_t = \hat{p}-\hat{q}$)} \\
        &~\leq~ -\TriBreg{v}{\pi}{\Grad\Phi^*(\hat{\pi} - \hat{q})}
            \qquad\text{(by \Proposition{GeneralBregmanProjection})} \\
        &~=~ \TriBreg{x_t}{x_{t+1}}{\Grad\Phi^*(\hat{x}_t - \eta_t \ghat_t)}.
\end{align*}
Plugging this into \eqref{eq:DAAlmostDone} completes the proof.
\end{proofof}

\subsection{Remarks}

Interestingly, the \emph{doubling trick}~\citep{Shalev-Shwartz12} on OMD can be
viewed as an incarnation of stabilization. To see this, set $\eta_t \coloneqq
1/\sqrt{2^{\lfloor\lg t \rfloor} }$ and $\gamma_t \coloneqq
\textbf{1}_{\{\mbox{\small$t$ is a power of 2}\}} $. Then, for each dyadic
interval of length $2^\ell$, the first iterate is $x_1$ and a fixed learning
rate $1/\sqrt{\smash[b]{2^\ell}}$ is used. Thus, with these parameters,
Algorithm~\ref{alg:somd} reduces to the doubling trick.

One should note that in Theorem~\ref{thm:somd} the stabilization parameter
\(\gamma_t\) used in round \(t \geq 1\) depends on the learning rates $\eta_t$ and $\eta_{t+1}$, the latter of which is used in the \emph{next} round.
So the learning rate $\eta_{t+1}$ must be known in iteration $t$ in order to calculate $\gamma_t$ appropriately.
This subtlety will arise, for example, when we derive first-order regret bounds in \Section{FirstOrder} --- here the learning rate is based on the subgradients of the
past functions, not just on the iteration number.
Reindexing the learning rates could fix the problem, but then the proof of \Theorem{somd} would look syntactically odd.
Although this dependence on the future may seem unnatural, in \Section{comparison} we shall see that
under some mild conditions, stabilized OMD coincides exactly with DA with
dynamic learning rates after reindexing. This matches the behavior observed
between OMD and DA when the learning rates are fixed. In this sense,
stabilization may seem as a natural way to fix OMD for dynamic learning rates.

\section{Applications}
\label{sec:applications}

In this section we show that stabilized-OMD and DA enjoy the same regret bounds in several applications that involve a dynamic learning rate.

\subsection{Strongly-convex mirror maps}
\SectionName{strongly-convex-ds-omd}

We now analyze the algorithms of the previous section in the scenario
that the mirror maps are strongly convex. Let $\eta_t, \gamma_t, f_t$ be
as in the previous section. The next result is a corollary of
Theorems~\ref{thm:somd}, \ref{thm:psomd}, and \ref{thm:DualAveraging}.

\begin{corollary}[Regret bound for stabilized OMD and DA]
\CorollaryName{Nesterov} Suppose that the mirror map $\Phi$ is $\rho$-strongly convex
on $\cX$ with respect to a norm $\| \cdot \|$. Let $\{x_t\}_{t \geq 1}$
be the iterates produced by one of Algorithms~\ref{alg:somd},~\ref{alg:ps_omd}, or~\ref{alg:DualAveraging} (for \Algorithm{ps_omd}, the
additional assumption \eqref{eq:PSOMDConvexity} is required). Then, for
all~\(T > 0\) and \(z \in \cX\),
\begin{equation*}
  \mathrm{Regret}(T, z) ~\leq~ \sum_{t=1}^T \frac{\eta_t
    \|\ghat_t\|_*^2}{2\rho} 
    +  \frac{ D_\Phi(z, x_1) }{\eta_{T+1}}.
\end{equation*}
\end{corollary}

This is identical to the bound for dual averaging in
\citet[Eq.~2.15]{Nesterov09} (taking his $\lambda_i \coloneqq 1$ and his
$\beta_i \coloneqq1/\eta_i$).
%
%
\comment{
This situation is not too surprising: with a fixed learning rate
$\eta$, OMD (without stabilization) and DA are known to have an
identical regret bound.  Nevertheless, one may verify that the
algorithms are genuinely different and generate different sequences of
iterates.
}
The proof is based on the following simple proposition, which bounds the
Bregman divergence when $\Phi$ is strongly convex \citep[pp.~300]{Bubeck15}. 

\begin{restatable}{proposition}{StrongConvexBound}
\PropositionName{StrongConvexBound}
Suppose $\Phi$ is $\rho$-strongly convex on $\cX$ with respect to $\norm{\cdot}$.
For any $x, x' \in \cX$ and $\hat{q} \in \bR^n$,
\begin{equation*}
    \TriBreg{x}{x'}{\Grad\Phi^*(\hat{x} - \hat{q})
} \:\leq\: \norm{\hat{q}}_*^2 / 2 \rho.
\end{equation*}
\end{restatable}

\begin{proof}
    First we apply \Proposition{BregmanTriangle} with $a=x$, $b=x'$ and $d=\Grad\Phi^*(\hat{x}-\hat{q})$ to obtain
    \begin{alignat*}{3}
    \TriBreg{x}{x'}{d}
    &~=~ \inner{\hat{x}-\hat{d}}{x-x'} - D_\Phi(x',x) \\
    &~=~ \inner{\hat{q}}{x-x'} - D_\Phi(x',x)
        &&\quad\text{(since $\hat{d}=\hat{x}-\hat{q}$)}\\
    &~\leq~ \norm{\hat{q}}_* \norm{x-x'} - \frac{\rho}{2}\norm{x-x'}^2
        &&\quad\text{(by the def.\ of dual norm and Prop.~\ref{prop:BregmanStrong})}\\
    &~\leq~ \norm{\hat{q}}_*^2 / 2 \rho
        &&\quad\text{(by \Fact{quadratic})}.
    \end{alignat*}
\end{proof}

Now the proof of \Corollary{Nesterov} is a simple application of the above proposition to the abstract regret bounds we have for each algorithm.

\vspace{6pt}
\begin{proofof}{\Corollary{Nesterov}}
The regret bounds proven by Theorems~\ref{thm:somd}, \ref{thm:psomd} and \ref{thm:DualAveraging} all involve a summation with terms of a similar form.
\begin{align*}
&\text{\Theorem{somd}:}~~\TriBreg{x_t}{x_{t+1}}{w_{t+1}} \\
&\text{\Theorem{psomd}:}~~\TriBreg{x_t}{y_{t+1}}{w_{t+1}}
\\
&\text{\Theorem{DualAveraging}:}~~\TriBreg{x_t}{x_{t+1}}{\Grad\Phi^*(\hat{x}_t - \eta_t \ghat_t)}
\end{align*}
We may bound all three using \Proposition{StrongConvexBound}.
In all three cases we have $x_t, x_{t+1} \in \cX$.
In \Theorem{psomd} we additionally have $y_{t+1} \in \cX$.
For Theorems~\ref{thm:somd} and \ref{thm:psomd}
we have $w_{t+1} = \Grad\Phi^*(\hat{x}_t - \eta_t \ghat_t)$ by
\eqref{eq:GradientStep} and \eqref{eq:GradientStep2}. Therefore all of these
terms may be bounded using \Proposition{StrongConvexBound} with $x=x_t$
and $\hat{q} = \eta_t \ghat_t$, yielding the claimed bound.
\end{proofof}

\subsection{Prediction with expert advice}
\SectionName{rwm-ds-omd}

Next consider the setting of prediction with expert advice. In this
setting $\cX$ is the simplex $\Delta_n
\subset \bR^n$, the mirror map is $\Phi(x) \coloneqq \sum_{i=1}^n
x_i \log x_i$ for all $x \in \bar{\cD} \coloneqq \bR_{\geq 0}^n$ (taking $0 \ln 0 = 0$). Note that on $\cX$ the mirror map $\Phi$ is the negative of the entropy
function. The gradient of the mirror map and its conjugate are
\begin{equation}
\EquationName{ExpertMirror}
\Grad\Phi(x)_i = \ln(x_i) + 1
\quad \text{and} \quad
\Grad\Phi^*(\hat{x})_i = e^{\hat{x}_i-1}, \qquad \forall x \in \cD, \forall \xhat \in \Reals^n.
\end{equation}
For any two points $a \in \bar{\cD}$ and $b \in \cD$, an easy
calculation shows that $D_\Phi(a,b)$ is the generalized KL-divergence
$$
D_\KL(a,b) ~=~ \sum_{i=1}^n a_i \ln(a_i/b_i) - \norm{a}_1 + \norm{b}_1.
$$
Note that the KL-divergence is convex in its second argument for any \(b
\in \Dcal = \bR_{> 0}^n\) since the functions \(-\ln(\cdot)\) and
absolute value are both convex. This means that all the abstract regret
bounds from \Section{StabilizedOMD} hold in this setting. Using them we
will derive regret bounds for this setting with a little extra-work. As
an intermediate step, we will derive bounds that use the following
function:
\begin{equation*}
\Lambda(a,b)
~\coloneqq~ D_\KL(a,b)
 + \norm{a}_1 - \norm{b}_1 + \ln \norm{b}_1 
~=~ \sum_{i=1}^n a_i \ln(a_i/b_i) + \ln \norm{b}_1,
\end{equation*}
which is a standard tool in the analysis of algorithms for
the experts' problem. For examples, see~\citet[\S 2.1]{RooijEGK14a} and
\citet[Lemma~4]{cesa2007improved}. 
\comment{
An initial observation shows that
$\Lambda$ is non-negative in the experts' setting.
\begin{proposition}
\PropositionName{LambdaNonNeg}
$\Lambda(a,b) \geq 0$ for all $a \in \cX$, $b \in \cD$.
\end{proposition}
\comment{
\begin{proof}
Let us write $\Lambda(a,b) = - \sum_{i=1}^n a_i \ln \frac{b_i}{a_i} +
\ln\big(\sum_{i=1}^n b_i\big)$. Since $a$ is a probability distribution,
we may apply Jensen's inequality to show that this expression is
non-negative.
\end{proof}}
}

The next result is a corollary of
Theorems~\ref{thm:somd}, \ref{thm:psomd}, and \ref{thm:DualAveraging}.

\begin{corollary}
\CorollaryName{better_srwm_general} 
Assume we have $\eta_t \geq \eta_{t+1} > 0$ for each~$t > 1$. Define $\gamma_t \coloneqq \eta_{t+1}/\eta_t \in
(0,1]$ for all $t \geq 1$. Let \(x_1 \coloneqq \frac{1}{n}\vec{1}\) be the uniform distribution and $\{x_t\}_{t \geq 2}$
be the iterates produced by one of Algorithms~\ref{alg:somd},~\ref{alg:ps_omd}, or~\ref{alg:DualAveraging} in the setting of prediction with expert advice. Then, for all \(T > 0\) and \(z \in \cX\),
\begin{equation}
\EquationName{RWMRegret}
\Reg(T,z) ~\leq~ \sum_{t=1}^T  \frac{\Lambda(x_t,\Grad\Phi^*(\hat{x}_t - \eta_t \ghat_t))}{\eta_t}
     + \frac{\ln n}{\eta_{T+1}}.
\end{equation}
\end{corollary}

The proof is a direct consequence of the following proposition, which is proven in \Appendix{rwm-ds-omd}.
\begin{restatable}{proposition}{dualsomdKL}
\PropositionName{dual_somd_KL}
$ \TriBreg{a}{b}{c} \leq \Lambda(a,c) $ for $a, b \in \cX$,  $c \in \cD$.
\end{restatable}
\begin{proofof}{\Corollary{better_srwm_general}}
Recall that \(D_{\KL}\) is convex in its second
argument, which allows us to use the bound~\eqref{eq:PSRegretBound}
for primal-stabilized OMD. As in the proof of \Corollary{Nesterov}, we
first observe that the regret bounds
\eqref{eq:RegretBound}, \eqref{eq:PSRegretBound} and \eqref{eq:DARegretBound}
all have sums with terms of the form
$\TriBreg{x_t}{u_t}{\Grad\Phi^*(\hat{x}_t - \eta_t \ghat_t)}$ for some irrelevant $u_t
\in \cX$, and hence may be bounded using \Proposition{dual_somd_KL}. Finally,
the standard inequality $\sup_{z \in \cX} D_\KL(z, x_1) \leq \ln n$
completes the proof.
\end{proofof}

\subsubsection{Anytime regret} \label{sec:anytime-regret}

From \Corollary{better_srwm_general} we now derive an anytime regret
bound in the case of bounded costs. This matches the best known bound
appearing in the literature; see \citet[Theorem
2.4]{introduction-online-optimization} and \citet[Proposition 2.1]{Ger11}.
\TRJournalSplit{Moreover, in Appendix~\ref{sec:app_lowerbound} we show that this is tight for DA in the case $n=2$.}{ Moreover, in our technical report~\citep{TR} we show that this is tight for DA in the case $n=2$.}
By the equivalence of DA and DS-OMD in the experts' setting (\Corollary{EquivDSOMDAndDA} in \Section{comparison}), this regret bound is also tight for DS-OMD.

\begin{corollary}
\CorollaryName{better_srwm} Define $\eta_t = 2 \sqrt{\ln(n) / t}$ and
$\gamma_t = \eta_{t+1}/\eta_t \in (0,1]$ for each $t \geq 1$. Let $\{f_t
\coloneqq \iprod{\ghat_t}{\cdot}\}_{t \geq 1}$ be such that $\ghat_t \in
[0,1]^n$ for all~\(t \geq 1\). Let \(x_1\) be the uniform distribution
$\frac{1}{n}\vec{1}$ and let $\{x_t\}_{t \geq 2}$ be as in one of
Algorithms~\ref{alg:somd},
\ref{alg:ps_omd}, or~\ref{alg:DualAveraging} in the prediction with experts advice setting. Then,
\begin{equation*}
\EquationName{AnytimeRWMRegret}
\Regret(T,z) ~\leq~ \sqrt{ T \ln n },
\qquad\forall T \geq 1, \forall z \in \X.
\end{equation*}
\end{corollary}


The proof follows from 
\Corollary{better_srwm_general} and Hoeffding's Lemma, as shown below.

\begin{lemma}[{\protect Hoeffding's Lemma,~\citealp[Lemma 2.2]{game_prediction}}]
\LemmaName{hoeffding}
Let $X$ be a random variable with $a \leq X \leq b$. Then for any $s \in \R$,
\[
\ln \E [ e^{sX} ] - s \E X ~\leq~ \frac{s^2(b-a)^2}{8}.
\]
\end{lemma}

\begin{proofof}{\Corollary{better_srwm}}
By~\eqref{eq:ExpertMirror} we have \(\nabla \Phi^*(\hat{x}_t - \eta_t \ghat_t)_i =
x_t(i) \exp(-\eta_t \ghat_t(i))\) for each \(i \in [n]\). This together with
\Lemma{hoeffding} for $s = -\eta_t$ yields
\begin{align*}
    \label{eq:srwm2a}
    \Lambda(x_t&, \nabla \Phi^*(\hat{x}_t - \eta_t \ghat_t)) 
    = \eta_t \iprod{\ghat_t}{x_t} + \ln\paren[\Big]{
        \sum_{i=1}^n x_{t}(i) e^{-\eta_t \ghat_{t}(i)}
    } 
    \leq \frac{\eta_t^2}{8}.
 \end{align*}
Plugging this and $\eta_t = 2 \sqrt{ \frac{\ln n}{t} }$ into \Equation{RWMRegret}, we obtain
\begin{equation*}
\Regret(T)
\leq
\sqrt{\ln n} \Bigg( \frac{1}{4} \sum_{t=1}^T \frac{1}{\sqrt{t}} + \frac{\sqrt{T+1}}{2} \Bigg)
\leq
\sqrt{\ln n} \Bigg( \frac{2\sqrt{T} - 1}{4} + \frac{\sqrt{T} + 0.5}{2} \Bigg) 
\leq \sqrt{T \ln n}
\end{equation*}
by \Fact{SquareRoot} and concavity of square root.
\end{proofof}

\subsubsection{First-order regret bound}
\SectionName{FirstOrder}

The regret bound described in Section~\ref{sec:anytime-regret} depends
on~$\sqrt{T}$; this is known as a ``zeroth-order'' regret bound. In some
scenarios the cost of the best expert up to time $T$ can be far less
than $T$. This makes the problem somewhat easier, and it is possible to
improve the regret bound. Formally, let $L_T^*$ denote the total cost of
the best expert until time~$T$. Then $L_T^* \leq T$ due to our
assumption that all costs are at most~$1$. A ``first-order'' regret
bound depends on $\sqrt{L_T^*}$ instead of~$\sqrt{T}$.

The only modification to the algorithm is to change the learning rate.
If the costs are ``smaller than expected'', then intuitively time is
progressing ``slower than expected''. We will adopt an elegant idea from
\citet{AuerCG02}, which is to use the algorithm's cost itself as a
measure of the progression of time, and to incorporate this into the
learning rate. They called this a ``self-confident'' learning rate.

\begin{corollary}
\CorollaryName{first-order-bound} Let $\{f_t \coloneqq
\iprod{\ghat_t}{\cdot}\}_{t \geq 1}$ be such that $\ghat_t \in [0,1]^n$ for
all~\(t \geq 1\). Define $\gamma_t = \eta_{t+1}/\eta_t \in (0,1]$  and
$\eta_t = \sqrt{\ln(n) / (1+\sum_{i<t} \inner{\ghat_i}{x_i}) }$ for all $t
\geq 1$.  Let \(x_1\) be the uniform distribution
$\frac{1}{n}\vec{1}$ and let $\{x_t\}_{t \geq 2}$ be as in one of
Algorithms~\ref{alg:somd},
\ref{alg:ps_omd}, or~\ref{alg:DualAveraging} in the prediction with experts advice setting. Denote the minimum
total cost of any expert up to time $T$ as $L_T^* \coloneqq \min_{j \in
[n]} \sum_{t=1}^T \ghat_{t}(j) $. Then,
\[
\Reg(T,z) ~\leq~ 2\sqrt{ \ln(n) L_T^* } +  8 {\ln n},
\quad\forall T \geq 1, \forall z \in \cX.
\]
\end{corollary}





The main ingredient is the following alternative bound on~$\Lambda$,
which is proven in \Appendix{rwm-ds-omd}.

\begin{restatable}{proposition}{LambdaTaylor}
\PropositionName{LambdaTaylor}
Let $a \in \cX$, $\hat{q} \in [0,1]^n$ and $\eta>0$.
Then $\Lambda(a,\Grad\Phi^*(\hat{a}-\eta\hat{q})) \leq \eta^2 \inner{a}{\hat{q}}/2$.
\end{restatable}

\begin{proofof}{\Corollary{first-order-bound}}
Let \(z \in \X\). From \Corollary{better_srwm_general} and \Proposition{LambdaTaylor}, we have
\begin{align}
    \sup_{z' \in \cX}\sum_{t=1}^T\inner{\ghat_t}{x_t - z'} 
     ~\leq~ \sum_{t=1}^T \frac{\eta_t \inner{\ghat_t}{x_t}}{2}
     + \frac{\ln n}{\eta_{T+1}}. ~\label{eq:first-order-eq2}
\end{align}
Denote the algorithm's total cost at time $t$ by $A_t = \sum_{i \leq t}
\inner{\ghat_i}{x_i}$. Recall that the total cost of the best expert at time
$T$ is $L_T^* = \min_{z' \in \Delta_n} \sum_{t=1}^T \inner{\ghat_t}{z'}$
and the learning rate is $\eta_t = \sqrt{ \ln(n) / (1+A_{t-1})}$.
Substituting into~\eqref{eq:first-order-eq2},
\begin{align*}
    A_T - L_T^*
    ~   \leq~ \sqrt{\ln n} \Bigg( \frac{1}{2}
     \sum_{t=1}^T \frac{\inner{\ghat_t}{x_t}}{\sqrt{ 1+A_{t-1} }} ~+~ \sqrt{1+A_T} \Bigg)
    ~\leq~ \sqrt{ \ln n } \Big( \sqrt{ A_T } ~+~ \sqrt{A_T} + 1 \Big)
\end{align*}
by Proposition~\ref{prop:at_sum}
with $a_i = \inner{\ghat_i}{x_i}$ and $u = 1$.
Rewriting the previous inequality, we have shown that
\begin{equation*}
    A_T - L_T^*
    \leq 2 \sqrt{ \ln(n) A_T } + \sqrt{\ln n}.    
\end{equation*}
By Proposition~\ref{prop:ineq_xy} we obtain
\begin{equation*}
A_T - L_T^* 
    \leq 2 \sqrt{\ln(n) L_T^* } + \sqrt{\ln n} + 2\left( \ln n \right)^{3/4} + 4{\ln n}.
\end{equation*}
Since $A_T - L_T^* \geq \mathrm{Regret}(T,z)$, the result follows.
\end{proofof}

Comparing our bound with some existing results in the literature: our
constant term of $2$ obtained in \Corollary{first-order-bound} is better
than the constant ($\sqrt{2}/(\sqrt{2}-1)$) obtained by the doubling
trick~\citep[Exercise~2.8]{game_prediction}, and the constant
($2\sqrt{2}$) in~\citet{AuerCG02} but worse than the constant
($\sqrt{2}$) of the best known first-order regret bound due
to~\citet{YaroshinskyES04}. We also match the constant $2$ of the Hedge algorithm
from~\citet[Theorem~8]{RooijEGK14a}. Their result is actually more
general; we could similarly generalize our analysis, but that would
deviate too far from the main purpose of this paper.


\section{Comparing DS-OMD and DA}
\SectionName{comparison}

In this section we will write the iterates of dual-stabilized OMD in
two equivalent forms. First we will write it in a proximal-like
formulation similar to the mirror descent formulation
by~\citet{BeckT03}, shedding some light into the intuition behind
dual-stabilization. We then write the iterates from DS-OMD in a form
very similar to the original definition of DA by~\citet{Nesterov09}. 
This will allow us to intuitively understand why OMD does has poor results with dynamic step-size and to derive simple sufficient conditions under
which DS-OMD and DA generate the same iterates, mimicking  the relation
between OMD and DA for a fixed learning rate.

\citet{BeckT03} showed that the iterate \(x_{t+1}\) for round \(t+1\) from
OMD is the unique minimizer over \(\X\) of \(\eta_t
\iprod{\ghat_t}{\cdot} + D_{\Phi}(\cdot, x_t)\), where \(\ghat_t \in
\subdiff[f_t](x_t)\). The next proposition extends this formulation to
DS-OMD, recovering the result from Beck and Teboulle when~\(\gamma_t =
1\).  The proof is a simple application of optimality conditions
of~\eqref{eq:ds-omd_proximal}. (For the sake of completeness we carefully state classical results on optimality
conditions in \Appendix{comparison_app}). Recall that the \textbf{normal cone} to a set $C \subseteq \Reals^n$ at $x \in \Reals^n$ is the set $N_C(x)
\coloneqq \{ s\in \R^n \mid \langle s, y - x \rangle \leq 0~\forall y \in C\}$.

\begin{restatable}{proposition}{PropDsomdProx}
    \label{prop:prox-ds-omd}
    Let $\{f_t\}_{t \geq 1}$ be a sequence of convex functions with $f_t
\colon \X \rightarrow \R$ for each \(t \geq 1\). Assume we have $\eta_t \geq \eta_{t+1} > 0$ and \(\gamma_t \in [0,1]\) for each~$t \geq 1$.
    Let $\{x_t\}_{t \geq 1}$  and \(\{\ghat_t\}_{t \geq 1}\) be as in
    Algorithm~\ref{alg:somd}. Then, for any \(t \geq 1\),
    \begin{equation}
        \label{eq:ds-omd_proximal}
            \{x_{t+1}\} 
        = \argmin_{x \in \X}
        \paren[\Big]{ \gamma_t\paren[\big]{\eta_t \iprod{\ghat_t}{x}
        + D_{\Phi}(x, x_t)} + (1 - \gamma_t) D_{\Phi}(x, x_1)}.
    \end{equation}
\end{restatable}
\begin{proof}
    Let \(t \geq 1\) and let \(F_t \colon \Dcal \to \Reals\) be the
    function being minimized on the right-hand side
    of~\eqref{eq:ds-omd_proximal}. By definition we have \(x_{t+1} =
    \Pi_{\X}^\Phi({y}_{t+1})\).
    By the optimality conditions for Bregman projections (see
    Lemma~\ref{lemma:projection} in \Appendix{comparison_app}),
    \begin{equation*}
        x_{t+1} = \Pi_{\X}^\Phi({y}_{t+1})
        \iff   \yhat_{t+1} - \xhat_{t+1}
         = -\nabla (D_{\Phi}(\cdot, y_{t+1}))(x_{t+1})\in N_\X(x_{t+1}).
    \end{equation*}
    Referring to \eqref{eq:BregmanDef} we see that
    \(\nabla (D_{\Phi}(\cdot, b))(a) = \hat{a} - \hat{b}\) for any \(a,b \in \Dcal\). 
    Using this and the definitions
    from Algorithm~\ref{alg:somd} we get
    \begin{align*}
        \hat{y}_{t+1} - \xhat_{t+1}
        &= \gamma_t(\hat{x}_t - \eta_t \ghat_t)
        + (1 - \gamma_t) \hat{x}_1 - \xhat_{t+1}\\
        %
        %
        &=  \gamma_t(\xhat_t - \xhat_{t+1} 
        - \eta_t \ghat_t)
        + (1 - \gamma_t) (\xhat_1 - \xhat_{t+1}) \\
        &= - \gamma_t\paren[\big]{\nabla(D_\Phi(\cdot, x_{t}))(x_{t+1})  
        + \eta_t \ghat_t}
        - (1 - \gamma_t) \nabla(D_\Phi( \cdot, x_1 ) )(x_{t+1}) \\
        &= - \nabla F_t(x_{t+1}).
    \end{align*}
    Thus, we have \(- \nabla F_t(x_{t+1}) \in N_{\X}(x_{t+1}) \). Again by
    classical optimality conditions for convex optimization we
    conclude that \(x_{t+1} \in \argmin_{x \in \X} F_t(x)\), and strict convexity of \(\Phi\) yields uniqueness of the minimizer,
    as desired.
\end{proof}
As expected, when \(\gamma_t = 1\) for each \(t \geq 1\) the above
theorem recovers exactly the OMD formulation from~\citet{BeckT03}. Thus, the above result provides intuition to understand the effect of the
stabilization step on the iterates of the algorithm: it biases the
iterates toward points in \(\X\) which are not too far (w.r.t.\
the Bregman Divergence) from \(x_1\).


Despite their similar descriptions, \citet{OrabonaP18} showed that
OMD and DA may behave in extremely different ways even on the
well-studied experts' problem with similar choices of step-sizes. This
extreme difference in behavior is not clear from the classical
algorithmic description of these methods as in \Algorithm{aomd}.

It is also well-known that DA can be seen as an instance of the
FTRL algorithm; see~\citet[\S 4.4]{Bubeck15} or~\citet[\S
5.3.1]{Hazan16}. More specifically, if \(\{x_t\}_{t \geq 1}\) and
\(\{\ghat_t\}_{t \geq 1}\) are as in \Algorithm{DualAveraging}, then for every \(t \geq 0\), we have\footnote{The
\(\iprod{\nabla \Phi(x_1)}{x}\) term disappears if \(x_1\) minimizes
$\Phi$ on~\(\X\).}
\begin{equation}
    \label{eq:da_min_form}
    \tag{DA-Prox}
    \{x_{t+1}\} = \argmin_{x \in \X}
    \paren[\Big]{\eta_{t+1} \sum_{i = 1}^t \iprod{\ghat_i}{x} 
    - \iprod{\xhat_1}{x} + \Phi(x)}.
\end{equation}
\comment{The above closed formula for the iterates is advantageous when
we want to compare the iterates of DA with iterates from other
algorithms since it depends only on the subgradients \(\ghat_t\) and not
directly on the iterates \(x_t\).} 
In the next theorem, proven in~\Appendix{comparison_app}, we write
DS-OMD in a similar form, but with vectors from the normal cone of
\(\X\) creeping into the formula due to repeatedly mapping  between the
primal and dual spaces. The result
by~\citet[Theorem~11]{McMahan17} is similar but slightly more intricate
due to the use of time-varying mirror maps. Moreover, their result does
not directly apply when we have stabilization. 

\begin{restatable}{theorem}{AvgMinForm}
    \label{thm:avg_min_form} 
    Let $\{f_t\}_{t \geq 1}$ with $f_t : \X \rightarrow \R$ be a
    sequence of convex functions and let \(\eta \colon \Naturals \to
    \Reals_{>0}\) be non-increasing. Let $\{x_t\}_{t \geq 1}$ and
    \(\{\ghat_t\}_{t \geq 1}\) be as in Algorithm~\ref{alg:somd}. Then,
    there are $\{p_t\}_{t \geq 1}$ with $p_t \in N_{\X}(x_t)$ for all $t
    \geq 1$ such that, if~\(\gamma_i = 1\) for all \(i \geq 1\), then
    for all \(t \geq 0\)
    \begin{equation}
        \label{eq:omd_min_form}
        \tag{OMD-Prox}
        \{x_{t + 1}\}
        = \argmin_{x \in \X}
        \paren[\Big]{\sum_{i = 1}^t \iprod{\eta_i \ghat_i + p_i}{x} - \iprod{\hat{x}_1}{x} + \Phi(x)}
    \end{equation}
    and if~\(\gamma_i = \tfrac{\eta_{i+1}}{\eta_i}\) for all \(i \geq 1\), then for all \(t \geq 0\)
    \begin{equation}
        \label{eq:ds-omd_min_form_2}
        \tag{DSOMD-Prox}
        \{x_{t + 1}\}
        = \argmin_{x \in \X}
        \paren[\Big]{\eta_{t+1} \sum_{i = 1}^t \iprod{\ghat_i + p_i'}{x} - \iprod{\hat{x}_1}{x} + \Phi(x)}.
    \end{equation}
    where \(p_t' \coloneqq \frac{1}{\eta_t} p_t \in N_\X(x_t)\) for every \(t \geq 1\).
\end{restatable}

\Theorem{avg_min_form} is an easy consequence of the following proposition.

\begin{proposition}
    \PropositionName{general_avg_min_form}
    Let $\{f_t\}_{t \geq 1}$ with $f_t \colon \X \rightarrow \R$ be a
    sequence of convex functions and let \(\eta \colon \Naturals \to \Reals_{>0}\) be non-increasing. Let $\{x_t\}_{t \geq 1}$ and
    \(\{\ghat_t\}_{t \geq 1}\) be as in Algorithm~\ref{alg:somd}.
    Define $\gamma^{[i,t]} \coloneqq \prod_{j = i}^t \gamma_j$
    for every~\(i,t  \in \Naturals\) with the convention \(\prod_{j = i}^t \gamma_j = 1\) for \(t < i\). Then, there are $\{p_t\}_{t \geq 1}$ with $p_t \in N_{\X}(x_t)$ for each $t \geq 1$ such that
    \begin{equation}
        \label{eq:ds-omd_min_form}
        \{x_{t + 1}\}
        = \argmin_{x \in \X}
        \paren[\Big]{\sum_{i = 1}^t \gamma^{[i,t]}\iprod{\eta_i \ghat_i + p_i}{x} - \paren[\Big]{\gamma^{[1,t]} + \sum_{i = 1}^t \gamma^{[i+1, t]} (1 - \gamma_i)} \iprod{\xhat_1}{x} + \Phi(x)}, 
        \quad \forall t \geq 0.
    \end{equation}    
\end{proposition}
\begin{proof}
    First of all, in order to prove~\eqref{eq:ds-omd_min_form} we claim it
    suffices to prove that there are \(\{p_t\}_{t \geq 1}\) with \(p_t \in
    N_{\X}(x_t)\) for each \(t \geq 1\) such that 
    \begin{equation}
        \label{eq:ds-omd_min_form_proof_1}
        \yhat_{t+1} 
        = - \sum_{i = 1}^t \gamma^{[i,t]}(\eta_i \ghat_i + p_i)
        + \paren[\Big]{\gamma^{[1,t]} 
        + \sum_{i = 1}^t \gamma^{[i+1,t]} (1 - \gamma_i)} \xhat_1,
        \qquad \forall t \geq 0. 
    \end{equation}
    To see the sufficiency of this claim, note that
    \begin{alignat*}{3}
        x_{t+1} &= \Pi_\X^{\Phi}(y_{t+1})
        \\
        &\iff \yhat_{t+1} - \xhat_{t+1} \in N_\X(x_{t+1})&
         \text{(by Lemma~\ref{lemma:projection})}
        \\
        &\iff \yhat_{t+1}  \in \subdiff[(\Phi 
        + {\indic[\X]{\cdot}})](x_{t+1})
        & (\subdiff[({\indic[\X]{\cdot}})](x) = N_{\X}(x))
         \\
        &\iff x_{t+1} \in \argmax_{x \in \Reals^n}
        \paren[\big]{\iprod{\yhat_{t+1}}{x} 
        - \Phi(x) - {\indic[\X]{x}}}
        & (\text{by Lemma~\ref{lemma:subdiff_attainability}})\\
        & \iff x_{t+1} \in \argmin_{x \in \X}\paren[\big]{-\iprod{\yhat_{t+1}}{x} 
        + \Phi(x)}.
    \end{alignat*}
    The above together with~\eqref{eq:ds-omd_min_form_proof_1}
    yields~\eqref{eq:ds-omd_min_form}. Let us now
    prove~\eqref{eq:ds-omd_min_form_proof_1} by induction on \(t \geq 0\).

    For \(t = 0\), we have that~\eqref{eq:ds-omd_min_form_proof_1} holds trivially.
    Let \(t > 0\). By definition, we have \(\yhat_{t+1} = (1 -
    \gamma_t)(\xhat_t - \eta_t \ghat_t) + \gamma_t \xhat_1\). At this point,
    to use the induction hypothesis, we need to write \(\xhat_t\) as a
    function of \(\yhat_t\). From the definition of
    Algorithm~\ref{alg:somd}, we have \(x_t = \Pi_{\X}^{\Phi}(y_t)\). By
    Lemma~\ref{lemma:projection}, the latter holds if and only if
    \(\yhat_t - \xhat_t  \in N_\X(x_t)\). That is, there is \(p_t \in
    N_\X(x_t)\) such that \(\xhat_t = \yhat_t - p_t\). Plugging these
    facts together and using our induction hypothesis we have
    \begin{align*}
        \yhat_{t+1} 
        &= \gamma_t(\xhat_t - \eta_t \ghat_t) + (1 - \gamma_t) \xhat_1
        = \gamma_t(\yhat_t - \eta_t \ghat_t - p_t) + (1 - \gamma_t) \xhat_1\\
        &\stackrel{\text{I.H.}}{=}
        \begin{aligned}[t]
            \gamma_t\paren[\Bigg]{
                -\sum_{i = 1}^{t-1} &\gamma^{[i,t-1]}(\eta_i \ghat_i + p_i)
                - \eta_t \ghat_t - p_t  \\
                &+ \paren[\Big]{\gamma^{[1,t-1]} + \sum_{i = 1}^{t-1} \gamma^{[i+1,t-1]} (1 - \gamma_i) } \xhat_1
            }
            + (1 - \gamma_t )\xhat_1 
        \end{aligned}
        \\
        &= -\sum_{i = 1}^{t} \gamma^{[i,t]}(\eta_i \ghat_i + p_i)
             + \paren[\Big]{\gamma^{[1,t]} 
            + \sum_{i = 1}^{t} \gamma^{[i+1,t]} (1 - \gamma_i) } \xhat_1,
    \end{align*}
    and this finishes the proof of~\eqref{eq:ds-omd_min_form_proof_1}.
\end{proof}

\begin{proofof}{\Theorem{avg_min_form}}
    Define $\gamma^{[i,t]}$
    for every~\(i,t  \in \Naturals\) as in \Proposition{general_avg_min_form}. If \(\gamma_t = 1\) for all \(t \geq 1\), then \(\gamma^{[i,t]} =
    1\) for any \(t,i \geq 1\). Moreover,  if~\(\gamma_t = 
    \tfrac{\eta_{t+1}}{\eta_t}\) for every \(t \geq 1\), then for every \(t, i
    \in \Naturals\) with \(t \geq i\) we have \(\gamma^{[i,t]} =
    \tfrac{\eta_{t+1}}{\eta_i}\), which yields \(\gamma^{[i,t]}(\eta_i \ghat_i + p_i) =
    \eta_{t+1} (\ghat_i + \tfrac{1}{\eta_i} p_i)\) and
    \begin{align*}
        \gamma^{[1,t]} + \sum_{i = 1}^t \gamma^{[i+1,t]} (1 - \gamma_i)
        &= \frac{\eta_{t+1}}{\eta_1} 
    + \sum_{i = 1}^t \frac{\eta_{t+1}}{\eta_{i+1}} 
        \paren[\Big]{1 - \frac{\eta_{i+1}}{\eta_i}}\\
        &= \frac{\eta_{t+1}}{\eta_1} 
        + \eta_{t+1} \sum_{i = 1}^t 
        \paren[\Big]{\frac{1}{\eta_{i+1}}  - \frac{1}{\eta_i}}
        = 1. 
        \end{align*}
\end{proofof}

With the above theorem, we may compare the iterates of DA, OMD, and
DS-OMD by comparing the
formulas~\eqref{eq:da_min_form},~\eqref{eq:omd_min_form},
and~\eqref{eq:ds-omd_min_form_2}. For the simple unconstrained case
where \(\X = \Reals^n\) we have \(N_{\X}(x_t) = \{0\}\) for each \(t
\geq 1\), so both \eqref{eq:da_min_form} and \eqref{eq:ds-omd_min_form_2} are identical. If the learning rate is constant, then all three formulas are equivalent. However, if the learning rate
is not constant, OMD \emph{is not} equivalent to the latter methods: changing the ordering of the subgradients \(\ghat_1, \dotsc, \ghat_t\) affects the choice of \(x_{t+1}\) in~\eqref{eq:omd_min_form}, while this does not happen in the other two formulas.

When \(\X\) is an arbitrary convex set, DA and DS-OMD are not
necessarily equivalent anymore due to the vectors $p_i'$ in the normal cone
of \(\X\). However, if we know that the iterates live in the relative interior of
\(\X\), the next lemma
shows that these vectors do not affect the set of minimizers
in~\eqref{eq:ds-omd_min_form_2}.

\begin{lemma}
    \label{lemma:normal_cone_relint}
    For any \(\xcirc \in \relint \X\) we have \(N_{\X}(\xcirc) =
    -N_{\X}(\xcirc)\). In particular, for any \(p
    \in N_{\X}(\xcirc)\) we have \(\iprod{p}{x} = \iprod{p}{\xcirc}\) for
    every \(x \in \X\).
\end{lemma}
\begin{proof}
    Let \(\xcirc \in \relint \X\). For the sake of contradiction, suppose there is \(p \in N_{\cX}(\xcirc)\) and \(x \in \X\) such that \(\iprod{p}{x - \xcirc} < 0\), that is, \(-p \not\in N_{\X}(\xcirc)\). Since \(\xcirc\) is in the relative interior of \(\X\), there is \(\mu > 1\) such that \(x_{\mu} \coloneqq \mu \xcirc + (1 - \mu)x \in \X \) (see~\citealp[Thm.~6.4]{roc70}). Then, 
    \begin{equation*}
        \iprod{p}{x_{\mu} - \xcirc} = 
        (1 - \mu) \iprod{p}{x - \xcirc} > 0,
    \end{equation*}
    a contradiction since \(p \in N_{\X}(\xcirc)\).
\end{proof}

With this lemma, we can easily derive simple and intuitive conditions
under which DS-OMD and DA are equivalent.
\comment{
By Lemma~\ref{lemma:projection} shows that the Bregman Projection
w.r.t.\ \(\Phi\) onto \(\X\) must be realized in \(\X \cap \Dcal\).
If \(\Dcal \cap \X \subseteq \interior \X\), then the iterates from
DS-OMD all lie in \(\interior \X\), where the normal cone of \(\X\)
is~\(\{0\}\) and the algorithms are equivalent. The next corollary
extends this observation yo the more general case where \(\X\) has empty
interior.}
\begin{corollary}
    \CorollaryName{EquivDSOMDAndDA}
    Let \(\Dcal \subseteq \Reals^n\) be the interior of the domain of
    \(\Phi\), let $\{x_t\}_{t \geq 1}$ be the DS-OMD iterates as
    in~\Algorithm{somd} and let $\{x_t'\}_{t \geq 1}$ be the DA
    iterates as in Algorithm~\ref{alg:aomd} with DA updates. If \(\Dcal
    \cap \X \subseteq \relint \X\) and \(x_1 = x_1'\), then \(x_t = x_t'\) for each \(t
    > 1\).
\end{corollary}
\begin{proof}
    Let \(t > 1\). Since \(x_t = \Pi_{\X}^{\Phi}(y_t)\), where
    \(y_t\) is as in Algorithm~\ref{alg:somd},
    Lemma~\ref{lemma:projection} implies~\(x_t \in \Dcal \cap \X
    \subseteq \relint \X\). By Lemma~\ref{lemma:normal_cone_relint} we
    have that the vectors on the normal cone
    in~\eqref{eq:ds-omd_min_form_2} do not affect the set of minimizers,
    which implies that~\eqref{eq:da_min_form}
    and~\eqref{eq:ds-omd_min_form_2} are equivalent.
\end{proof}
\vskip-1pt
An important special case of the above corollary is the prediction with
expert advice setting as in \Section{rwm-ds-omd}, where \(\Dcal=\Reals^n_{>0}\)
and $\X$ is the simplex $\Delta_n$. In this setting, \(\X \cap \Dcal
= \setst{x \in (0,1)^d}{\sum_{i =1}^n x_i = 1} = \relint \X\). By the
previous corollary, DS-OMD and DA produce the same iterates in this case,
even for dynamic learning rates. Classical OMD and DA were already known
to be equivalent in the experts' setting for a \emph{fixed} learning
rate~\citep[\S 5.4.2]{Hazan16}.
In contrast, with a dynamic learning rate, the DA and OMD iterates are certainly different, since OMD with a dynamic learning
rate may have linear regret~\citep{OrabonaP18}, whereas DA has sublinear regret.

\section{Dual-Stabilized OMD for Composite Functions}
\SectionName{regularized_problems}

In this section we extend the dual-stabilized OMD to the case where the
functions revealed at each round are
composite~\citep{Xiao,Duchi10compositeobjective}. More specifically, at each
round \(t \geq 1\) we see a function of the form \(f_t + \Psi\), where \(f_t\) and \(\Psi\) are convex functions, but the latter is
fixed and assumed to be ``easy'', i.e., we suppose we know
how to efficiently compute points in \(\argmin_{x \in \X}(D_{\Phi}(x,\xbar) +
\Psi(x))\). We could simply use the original dual-stabilized OMD in
this setting, but this approach has some drawbacks. One issue is that
subgradients of \(\Psi\) would end-up appearing in the regret bound from
Theorem~\ref{thm:somd}, which is not ideal: we want bounds that are unaffected
by the ``easy'' function \(\Psi\). Another drawback is that we would not be
using the knowledge of the structure of the functions, which may result in sub-optimal performance. For example, one may take \(\Psi = \norm{\cdot}_1\) hoping for sparse iterates. Yet, blindly applying OMD (and, thus, linearizing \(\Psi\)) does not yield sparse iterates~\citep{McMahan11a}. Finally, the
analysis of dual-stabilized OMD adapted to the composite setting is an easy extension of the original analysis of
\Section{StabilizedOMD}. Usually, algorithms for the composite setting require a more intricate analysis,
such as in the case of Regularized DA from~\citet{Xiao}, or the use of
powerful results, such as the duality between strong convexity and strong
smoothness used by~\citet{McMahan17}.
\cite{Duchi10compositeobjective} give regret bound a whose analysis is somewhat simpler and perhaps resembles ours.
Still, the techniques used in the latter do not directly apply when we use
dual-stabilization.

In the composite setting we assume without loss of generality that \(\X =
\Reals^n\) since we may substitute \(\Psi\) by \(\Psi + \indic[\X]{\cdot}\)
where \(\indic[\X]{x} = 0\) if \(x \in \X\) and is \(+\infty\) anywhere else. The \textbf{(effective) domain} of \(\Psi\) is the set \(\dom \Psi \subseteq \bR^n\) of points where \(\Psi\) is finite. To avoid confusion and make the effect of \(\Psi\) explicit, we
define the \textbf{\(\Psi\)-regret} of a sequence of functions \(\{f_t\}_{t \geq 1}\) and iterates \(\{x_t\}_{t \geq 1}\) (against a comparison point \(z \in \dom \Psi\)) by
\begin{equation*}
    \Reg^{\Psi}(T, z)
    \coloneqq 
    \sum_{t = 1}^T\big(f_t(x_t) + \Psi(x_t)\big) -
    \sum_{t = 1}^T\big( f_t(z) + \Psi(z)\big),
    \quad \forall T \geq 0.
\end{equation*}

To adapt the dual stabilization method to this setting, we use the same idea as
in~\cite{Duchi10compositeobjective}. Namely, we modify the proximal-like
formulation of dual stabilization from Proposition~\ref{prop:prox-ds-omd} so
that we do not linearize (i.e., take the subgradient) of  the function \(\Psi\).
This results in the following definition of dual-stabilized OMD for composite functions (given some \(x_1 \in \dom \Psi \cap \cD\)):
\begin{equation}
    \label{eq:def_reg_dsomd}
    \{x_{t+1}\} \coloneqq \argmin_{x \in \X}
    \paren[\Big]{ \gamma_t\paren[\big]{\eta_t(\iprod{\ghat_t}{x} + \Psi(x))
    + D_{\Phi}(x, x_t)} + (1 - \gamma_t) D_{\Phi}(x, x_1)},
    \quad \forall t \geq 1.
\end{equation}

\begin{algorithm}[t]
       \caption{Dual-stabilized OMD with dynamic learning
       rate $\eta_t$ and additional regularization function~\(\Psi\).}
       \label{alg:reg-somd}
       \begin{algorithmic}
         \STATE {\bfseries Input:}
         ${x}_1 \in \dom \Psi \cap \cD,\  
         \eta:\mathbb{N} \rightarrow \R_+,\
         \gamma : \mathbb{N} \to [0, 1] $
         \STATE $\hat{y}_1 = \nabla \Phi(x_1)$
         \FOR{$t=1,2,\ldots$}
            \STATE Incur cost $f_t(x_t)$ and receive $\ghat_t \in \partial f_t(x_t)$
            \STATE \vspace{-21pt} \addtolength{\jot}{-3pt}
                \begin{flalign}
                    & \hat{x}_t = \nabla \Phi(x_t) 
                    &&\text{$\rhd$ map primal iterate to dual space} 
                    \hspace{3cm}
                    \nonumber
                    \\
                    & \hat{w}_{t+1} = \hat{x}_t - \eta_t \ghat_t &&\text{$\rhd$ gradient step in dual space} \EquationName{CompGradientStep}
                    \\
                    & \hat{y}_{t+1} = \gamma_t \hat{w}_{t+1} + (1-\gamma_t) \hat{x}_1 &&\text{$\rhd$ stabilization in dual space} \EquationName{Comp-DSOMDHyp}
                    \\
                    & y_{t+1} = \nabla \Phi^*( \hat{y}_{t+1} ) &&\text{$\rhd$ map dual iterate to primal space}\nonumber
                    \\
                    & \alpha_{t+1} = \eta_t\gamma_t &&\text{$\rhd$ compute scaling factor for \(\Psi\)} \nonumber
                    \\
                    & x_{t+1} =
                \Pi^{\Phi}_{\alpha_{t+1}\Psi}(y_{t+1}) &&\text{$\rhd$ project onto feasible region} \EquationName{Comp-DSOMDHyp2}
                \end{flalign}
            \vspace{-21pt}
       \ENDFOR
    \end{algorithmic}
    \end{algorithm}

This equation defines the algorithm in a proximal form.
Due to the existence of $\Psi$, it is perhaps not obvious that it can also be written in pseudocode resembling \Algorithm{somd}.
Nevertheless, it can --- Algorithm~\ref{alg:reg-somd} presents
pseudocode equivalent to \eqref{eq:def_reg_dsomd}.
Interestingly, $\Psi$ appears in the pseudocode only in the projection step. In this new algorithm, we extend the
definition of Bregman Projection and define the \(\Psi\)-Bregman projection by
\(\{\Pi^{\Phi}_{\Psi}(y)\} \coloneqq \argmin_{x \in \Reals^n}(D_\Phi(x,y) +
\Psi(y))\).
\onlyJournal{For the sake of conciseness, we defer the proof of equivalence between Algorithm~\ref{alg:reg-somd} and \eqref{eq:def_reg_dsomd} to our technical report, but it boils down to properly interpreting the optimality conditions of~\eqref{eq:def_reg_dsomd}.}\onlyTR{
Let us first show that Algorithm~\ref{alg:reg-somd} produces the same iterates 
as~\eqref{eq:def_reg_dsomd}.
\begin{proposition}
    \label{prop:equiv}
    Let $\eta_t \geq 0$ and let $f_t \colon \dom \Psi \rightarrow \R$ be a convex function for each \(t \geq
 1\). Finally, let $\{x_t\}_{t \geq 1}$ be as in
 Algorithm~\ref{alg:reg-somd}. Then \(x_{t+1}\) satisfies~\eqref{eq:def_reg_dsomd} for each \(t \geq 1\).
\end{proposition}
\begin{proof}
    Let \(t \geq 1\).  By the optimality conditions of \(\Psi\)-Bregman projection (see Lemma~\ref{lemma:opt_condition}), we have
    \begin{equation}
        \label{eq:equiv_reg_1}
        \yhat_{t+1}
        \in \alpha_{t+1} \partial \Psi(x_{t+1})
        = \gamma_t \eta_t \partial \Psi(x_{t+1}).
    \end{equation}
    Moreover, by the definitions in Algorithm~\ref{alg:reg-somd} we have
    \begin{equation*}
        \yhat_{t+1} - \xhat_{t+1}
        = \gamma_t \what_{t+1} + (1 - \gamma_t) \xhat_1 - \xhat_{t+1}
        = \gamma_t (\xhat_t - \eta_t \ghat_t - \xhat_{t+1}) + (1 - \gamma_t) (\xhat_1 - \xhat_{t+1}).
    \end{equation*}
    By plugging the above into~\eqref{eq:equiv_reg_1} and rearranging, we have
    \begin{equation*}
        0 \in \gamma_t[\eta_t (\ghat_t + \partial \Psi(x_{t+1})) + \xhat_{t+1} - \xhat_{t}] + (1 - \gamma_t)(\xhat_{t+1} - \xhat_1).
    \end{equation*}
    Note that, since \(\nabla (D_{\Phi}(\cdot,b))(a) = \hat{a} - \hat{b}\) for any \(a,b \in \cD\), the right-hand side of the above inclusion is the subdifferential at \(x_{t+1}\) of the function being minimized in \eqref{eq:def_reg_dsomd}. By using Lemma~\ref{lemma:opt_condition} again, we conclude that \(x_{t+1}\) satisfies \eqref{eq:def_reg_dsomd}.
\end{proof}

} The next lemma shows an analogue of the generalized Pythagorean
theorem for the \(\Psi\)-Bregman Projection.

\begin{lemma}
    \label{lemma:regularized_bregman_pythagorean}
    Let \(\alpha > 0\) and \(\ybar \coloneqq \Pi_{\alpha \Psi}^{\Phi}(y)\). Then, 
\begin{equation*}
    D_{\Phi}(x, \ybar) + D_{\Phi}(\ybar, y)
    \leq D_{\Phi}(x,y) + \alpha(\Psi(x) - \Psi(\ybar)), \qquad
    \forall x \in \Reals^n.
\end{equation*}    
\end{lemma}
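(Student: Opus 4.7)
The plan is to prove this generalized Pythagorean inequality via two ingredients: an optimality condition for $\ybar$ and the three-point identity for Bregman divergences (Proposition~\ref{prop:BregmanTriangle}). Since we are in the composite setting where $\X = \Reals^n$, the projection $\ybar = \Pi^{\Phi}_{\alpha \Psi}(y)$ is an unconstrained minimizer of $h(\cdot) \coloneqq D_\Phi(\cdot, y) + \alpha \Psi(\cdot)$, so by Lemma~\ref{lemma:opt_condition} (with $N_{\Reals^n}(\ybar) = \{0\}$) there exists $s \in \subdiff[\Psi](\ybar)$ such that
\begin{equation*}
\nabla \Phi(\ybar) - \nabla \Phi(y) + \alpha s ~=~ 0,
\qquad \text{i.e.,} \qquad
\alpha s ~=~ \nabla \Phi(y) - \nabla \Phi(\ybar).
\end{equation*}
The fact that $\ybar$ lies in $\Dcal$ (so that $\nabla \Phi(\ybar)$ is well-defined) follows from $\Phi$ being of Legendre type, as in the proof of Lemma~\ref{lemma:projection}.

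Next I would apply the three-point identity (Proposition~\ref{prop:BregmanTriangle}) with the roles $a \leftarrow \ybar$, $b \leftarrow x$, $d \leftarrow y$ to get
\begin{equation*}
D_\Phi(\ybar, y) - D_\Phi(x, y) + D_\Phi(x, \ybar)
~=~ \inner{\nabla \Phi(\ybar) - \nabla \Phi(y)}{\ybar - x}
~=~ \alpha \inner{s}{x - \ybar},
\end{equation*}
using the optimality relation from the previous step in the last equality.

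Finally, by the subgradient inequality applied to $s \in \subdiff[\Psi](\ybar)$, we have $\inner{s}{x - \ybar} \leq \Psi(x) - \Psi(\ybar)$. Substituting this bound and rearranging yields the claim. The main subtlety will be justifying the optimality condition — specifically, ensuring the relative interior hypothesis of Lemma~\ref{lemma:opt_condition} holds for $D_\Phi(\cdot, y) + \alpha \Psi$ and that $\ybar$ actually belongs to $\Dcal$ — but both are standard given that $\Phi$ is Legendre and that the minimization problem defining $\ybar$ admits a finite minimizer.
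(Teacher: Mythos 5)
Your proposal is correct and follows essentially the same route as the paper's proof: both identify \(\nabla \Phi(y) - \nabla \Phi(\ybar)\) as an element of \(\subdiff[(\alpha\Psi)](\ybar)\) via the optimality condition of the projection, apply the generalized triangle inequality (three-point identity) for Bregman divergences, and conclude with the subgradient inequality for \(\alpha\Psi\). The only difference is that you spell out the justification of the optimality condition in slightly more detail, which the paper leaves implicit.
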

\begin{proof}
    By the optimality conditions for convex optimization, we have
    \begin{equation*}
        \nabla \Phi (y) -
    \nabla \Phi(\ybar) \in \subdiff[(\alpha\Psi)](\ybar).
    \end{equation*}
    Using the generalized
    triangle inequality for Bregman Divergences (Proposition~\ref{prop:BregmanTriangle}) and the subgradient inequality,
    we get
  \begin{equation*}
    D_\Phi(x, \ybar) + D_\Phi(\ybar, y) - D_\Phi(x,y)
    = \iprod{\nabla \Phi(y) - \nabla \Phi(\ybar)}{x - \ybar}
    \overset{\rm(i)}{\leq} \alpha(\Psi(x) - \Psi(\ybar)).
  \end{equation*}
  where (i) follows from \(\nabla \Phi (y) - \nabla \Phi(\ybar) \in \subdiff[(\alpha \Psi)](\ybar)\) and the convexity of $\alpha \Psi(\cdot)$.
\end{proof}

Finally, a regret bound such as the one we have for dual-stabilized OMD also holds in this setting when
using~Algorithm~\ref{alg:reg-somd}.

\begin{theorem} \TheoremName{RegularizedRegret}
Assume we have $\eta_t \geq \eta_{t+1} > 0$ for each~$t > 1$. Define $\gamma_t = \eta_{t+1}/\eta_t \in
(0,1]$ for all $t \geq 1$.  Let $\{f_t\}_{t \geq 1}$ be a sequence of
convex functions with $f_t \colon \bR^n \rightarrow \R$ for each \(t \geq
1\) and let \(\Phi \colon \bR^n \to \bR \cup \{+\infty\}\) be convex. Let $\{x_t\}_{t \geq 1}$ and $\{\what_t\}_{t \geq 2}$ be as in
Algorithm~\ref{alg:reg-somd}. Then, for all \(T > 0\) and \(z \in \dom \Psi\),
    \begin{equation}
        \label{eq:reg_somd_bound}
        \Reg^{\Psi}(T,z) ~\leq~ \sum_{t=1}^T \frac{\TriBreg{x_t}{x_{t+1}}{w_{t+1}}}{\eta_t} + \frac{ D_\Phi(z, x_1) }{\eta_{T+1}}, 
  \quad \forall T > 0.
    \end{equation}
\end{theorem}
\TRJournalSplit{
\begin{proof}[of \Theorem{RegularizedRegret}]
    Let \(z \in \dom \Psi\) and \(t \in \Naturals\).
    \begin{mdframed}
        By~\eqref{eq:regret_at_each_round}, we have
    \begin{equation}
        \label{eq:one_round_bound_regularized}
        f_t(x_t) + \Psi(x_t) - f_t(z) - \Psi(z)
        \leq 
        \begin{aligned}[t]
            \frac{1}{\eta_t} &\Big( D_\Phi(x_t, w_{t+1} ) + D_\Phi(z, x_t)\\ &- D_\Phi\big(z, {w}_{t+1}  \big)   \Big)
        + \Psi(x_t) - \Psi(z).
        \end{aligned}
    \end{equation}
    \end{mdframed}
    As in Theorem~\ref{thm:somd}, let us prove bound the above expression by
    something with telescoping terms.

\begin{claim}
    \ClaimName{CompositeDSOMD}
    Assume that $\gamma_t = \eta_{t+1}/\eta_t \in (0,1]$.
    Then
    \begin{alignat*}{3}
        \eqref{eq:one_round_bound_regularized}
        ~\leq~
        \frac{\TriBreg{x_t}{x_{t+1}}{w_{t+1}}}{\eta_t} 
            &+ \underbrace{\Big( \frac{1}{\eta_{t+1}} - \frac{1}{\eta_t} \Big)}_{\text{telescopes}} D_\Phi(z, x_1)
        \\
        & +
        \underbrace{\frac{D_\Phi(z, x_t)}{\eta_t} -\frac{D_\Phi(z, x_{t+1})}{\eta_{t+1}}}_{\text{telescopes}} 
        + \underbrace{\Psi(x_t) - \Psi(x_{t+1})}_{\text{telescopes}}.   
    \end{alignat*}
\end{claim}
\begin{proof}
    Fix \(z \in \dom \Psi \). First we derive the inequality
    \begin{alignat*}{3}
    &\gamma_t  \big( D_\Phi(z,w_{t+1}) - D_\Phi(x_{t+1},w_{t+1}) \big)
        \:+\: (1-\gamma_t)  &&D_\Phi(z,x_1)
        \\
    ~\geq~&     \gamma_t  \TriBreg{z}{x_{t+1}}{w_{t+1}}
        \:+\: (1-\gamma_t) \TriBreg{z}{x_{t+1}}{x_1}
        \qquad&&\text{(since $D_\Phi(x_{t+1},x_1) 
        \geq 0$ and $\gamma_t \leq 1$)}
        \\
    ~=~&        \TriBreg{z}{x_{t+1}}{y_{t+1}}
        \qquad &&\text{(by \Proposition{BregmanIdentity} and \eqref{eq:Comp-DSOMDHyp})}
        \\
    ~\geq~&     D_\Phi(z,x_{t+1})  + \alpha_{t+1} \big( \Psi( x_{t+1} ) - \Psi( z ) \big)
    \qquad &&\text{(by Lemma~\ref{lemma:regularized_bregman_pythagorean} and \eqref{eq:Comp-DSOMDHyp2})}.
\end{alignat*}
Rearranging and using both $\gamma_t>0$ and \(\alpha_{t+1} = \eta_t \gamma_t\)
yields
\begin{alignat}{3}
D_\Phi(z, w_{t+1})
    ~\geq~ 
        &D_\Phi(x_{t+1}, w_{t+1}) 
        - \Big(\frac{1}{\gamma_t}-1\Big) D_\Phi(z, x_1)
        \nonumber
        \\
        \nonumber
        \EquationName{CompDSOMDRearranged}
        &+ \frac{1}{\gamma_t} D_\Phi( z, x_{t+1} )
        + \eta_t \big( \Psi( x_{t+1} ) 
        - \Psi( z ) \big).
\end{alignat}
Plugging this into \eqref{eq:one_round_bound_regularized} 
yields
\begin{align*}
\eqref{eq:one_round_bound_regularized}
    ~=~& \frac{1}{\eta_t} \Big(
            D_\Phi(x_t,w_{t+1}) - D_\Phi(z,w_{t+1}) + D_\Phi(z,x_t)
        \Big) + \Psi(x_t) - \Psi(z)\\
    ~\leq~& 
    \begin{aligned}[t]
        \frac{1}{\eta_t} \Bigg(
        \TriBreg{x_t}{x_{t+1}}{w_{t+1}}
            + &\Big(\frac{1}{\gamma_t}-1\Big) D_\Phi(z, x_1)
            - \frac{1}{\gamma_t} D_\Phi( z, x_{t+1} )\\
            &+ D_\Phi(z,x_t)
        \Bigg) + \Psi(x_t) - \Psi(x_{t+1}).
    \end{aligned}
\end{align*}
The claim follows by the definition of $\gamma_t$. 
\end{proof}\begin{mdframed}
    The final step is very similar to the standard OMD proof. 
    Summing \eqref{eq:one_round_bound_regularized} over $t$
    and using \Claim{CompositeDSOMD} leads to the desired telescoping sum.
    \begin{alignat*}{3}
        &\sum_{t=1}^T \big( f_t(x_t) - f_t(z) \big)
        \\
            ~\leq~& \sum_{t=1}^T \Bigg(\frac{
            \TriBreg{x_t}{x_{t+1}}{w_{t+1}}}{\eta_t} &\:+\:& \Big( \frac{1}{\eta_{t+1}} - \frac{1}{\eta_t} \Big) D_\Phi(z, x_1)
            \nonumber
            \\
            &&&+\: \frac{D_\Phi(z, x_t)}{\eta_t} -\frac{D_\Phi(z, x_{t+1})}{\eta_{t+1}}
            +  \Psi(x_t) - \Psi(x_{t+1})\Bigg) 
            \\
            ~\leq~& \sum_{t=1}^T \frac{\TriBreg{x_t}{x_{t+1}}{w_{t+1}}}{\eta_t}
                &+&~~  \frac{ D_\Phi(z, x_1) }{\eta_{T+1}} + \Psi(x_1) - \Psi(x_{T+1})  \nonumber
            \\
            ~\leq~& \sum_{t=1}^T \frac{\TriBreg{x_t}{x_{t+1}}{w_{t+1}}}{\eta_t} &+&~~ \frac{ D_\Phi(z, x_1) }{\eta_{T+1}},
    \end{alignat*}
    where in the last step inequality we used \(x_1 \in \argmin_{x \in \X} \Psi(x)\). 
    \end{mdframed}
\end{proof}
}{
\begin{proof}\textbf{(Sketch)}
    The proof boils down to simple modifications to the original proof of Theorem~\ref{thm:somd}. We mostly have to replace \Claim{NewDSOMD} by the following claim, whose proof mimics the proof of the latter but replaces \Proposition{BregmanProjection} with Lemma~\ref   {lemma:regularized_bregman_pythagorean}.
    \begin{claim}
        \ClaimName{CompositeDSOMD}
        Assume that $\gamma_t = \eta_{t+1}/\eta_t \in (0,1]$.
        Then
        \begin{alignat*}{3}
            &\frac{1}{\eta_t}\Big( D_\Phi(x_t, w_{t+1} ) + &&D_\Phi(z, x_t) - D_\Phi\big(z, {w}_{t+1}  \big)   \Big)
        &&+ \Psi(x_t) - \Psi(z)\\
            \leq\;&
            \frac{\TriBreg{x_t}{x_{t+1}}{w_{t+1}}}{\eta_t} 
                &&+ \underbrace{\Big( \frac{1}{\eta_{t+1}} - \frac{1}{\eta_t} \Big)}_{\text{telescopes}} D_\Phi(z, x_1)
            && +
            \underbrace{\frac{D_\Phi(z, x_t)}{\eta_t} -\frac{D_\Phi(z, x_{t+1})}{\eta_{t+1}}}_{\text{telescopes}} 
            + \underbrace{\Psi(x_t) - \Psi(x_{t+1})}_{\text{telescopes}}.   
        \end{alignat*}
    \end{claim}
    Due to similarities with the proof of the previous regret bounds, we defer the complete proof to our technical report~\citep{TR}.
\end{proof}    
}



\comment{
\section{Discussion}

In this paper we modified OMD via \emph{stabilization} in order to
guarantee sublinear regret even when using the method with a dynamic
learning-rate. We showed that (primal and dual) stabilized-OMD recover
the regret bounds enjoyed by DA in the anytime setting, presented some
applications of our results, and analyzed the similarities and
differences between DS-OMD, OMD, and DA.
A distinctive feature of our proofs are
their relative simplicity if compared to other results from the
literature. It is our hope that the simplicity of our analysis framework
allows it to be extended to other problems. Moreover, the modularity of
our proofs allowed us to extend this analysis for DA, a fact interesting
on its own since drastically different analysis techniques are usually
used to analyze DA in the literature (such as the Follow the Leader-Be
the Leader Lemma and optimality conditions of~\eqref{eq:da_min_form},
see~\citealp[Section~2.3]{Shalev-Shwartz12} for an example). This together with
our analysis from \Section{comparison} helps demystify the connections
between DA and OMD, since in spite of having similar descriptions they
had very different analyses and behaved wildly differently in some
scenarios. We believe that a better understanding between the
differences between DA and OMD will be helpful in future applications
and in the design of new algorithms.
}


\acks{We thank Chris Liaw for pointing out a slight flaw in the proofs in an earlier draft of this paper.
We also thank Francesco Orabona for
suggesting the use of a slightly different definition of regret which
allows for more nuanced statements of our results.
We also express our
gratitude for the detailed feedback given by the three anonymous
reviewers from ICML 2020.}





\newpage

\appendix

\section{Standard facts}

\subsection{Scalar inequalities}


\begin{fact}
\FactName{quadratic}
For any $a > 0$ and $b, x \in \R$, we have $-a x^2 + b x \leq b^2 / 4a$.
\end{fact}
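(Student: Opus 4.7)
The plan is to prove this by completing the square in the variable $x$. Since $a > 0$, we can factor:
\[
-a x^2 + b x \;=\; -a\Bigl(x^2 - \tfrac{b}{a} x\Bigr) \;=\; -a\Bigl(x - \tfrac{b}{2a}\Bigr)^2 + \tfrac{b^2}{4a}.
\]
The first term on the right is nonpositive because $a>0$ and a square is nonnegative, so the whole expression is bounded above by $b^2/(4a)$, which is exactly the desired inequality.

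An equivalent route, if preferred, is to note that $f(x) = -a x^2 + b x$ is a concave quadratic in $x$ (since $a > 0$), so its unique maximum occurs at the stationary point $f'(x) = -2ax + b = 0$, i.e., $x^\star = b/(2a)$, and substituting gives $f(x^\star) = b^2/(4a)$. Either derivation is a couple of lines of algebra.

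There is no real obstacle here: the only thing to be mindful of is the sign condition $a > 0$, which is needed both to legitimately divide by $a$ in the completing-the-square step and to ensure the squared term has the correct (nonpositive) sign after multiplication by $-a$. No earlier results from the paper are required.
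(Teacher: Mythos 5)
Your completing-the-square argument is correct, and the paper states this fact without proof, so there is nothing to compare against; your derivation is the standard one and fills the gap appropriately. The alternative calculus route you mention is also fine, though unnecessary.
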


\begin{fact}
\FactName{ExpTaylor2}
$ e^{-x} \leq 1 - x + \frac{x^2}{2} $ for $x \geq 0$.
\end{fact}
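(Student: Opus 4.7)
The plan is to reduce the inequality to a simple monotonicity argument by studying the auxiliary function
\[
    f(x) \;=\; 1 - x + \tfrac{x^2}{2} - e^{-x}.
\]
It suffices to show $f(x) \geq 0$ for all $x \geq 0$.

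First I would evaluate at the boundary: $f(0) = 1 - 0 + 0 - 1 = 0$, so the inequality holds with equality at $x=0$. Next, I would differentiate twice to obtain $f'(x) = -1 + x + e^{-x}$ and $f''(x) = 1 - e^{-x}$. The second derivative is nonnegative on $[0,\infty)$ since $e^{-x} \leq 1$ there, which means $f'$ is nondecreasing on $[0,\infty)$. Combined with $f'(0) = 0$, this gives $f'(x) \geq 0$ for $x \geq 0$, and hence $f$ is nondecreasing on $[0,\infty)$. Since $f(0) = 0$, this yields $f(x) \geq 0$ for all $x \geq 0$, which is precisely the desired bound.

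There is no real obstacle here; the only thing to be careful about is keeping the direction of the inequality consistent and noting that the argument uses $x \geq 0$ in two places (to bound $e^{-x} \leq 1$ and to apply the monotonicity of $f$ forward from $0$). An alternative one-line proof would invoke Taylor's theorem with Lagrange remainder, writing $e^{-x} = 1 - x + \tfrac{x^2}{2} - \tfrac{x^3}{6} e^{-\xi}$ for some $\xi \in (0,x)$, and observing that the remainder term is nonnegative when $x \geq 0$; but the elementary monotonicity argument above keeps the appendix self-contained.
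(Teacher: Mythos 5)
Your argument is correct: $f(x) = 1 - x + \tfrac{x^2}{2} - e^{-x}$ satisfies $f(0)=f'(0)=0$ and $f''(x) = 1 - e^{-x} \geq 0$ on $[0,\infty)$, so integrating twice gives $f \geq 0$ there. The paper states this fact without proof, so there is nothing to compare against; your elementary monotonicity argument (or the equivalent Lagrange-remainder one-liner you mention) is exactly the standard justification one would supply.
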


\begin{fact}
\FactName{SquareRoot}
$\sum_{i=1}^t \frac{1}{\sqrt{i}} \leq 2\sqrt{t} - 1$ for $t \geq 1$.
\end{fact}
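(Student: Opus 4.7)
The plan is to prove this by an integral comparison, which is the cleanest route. Since $f(x) = 1/\sqrt{x}$ is positive and decreasing on $[1,\infty)$, for each integer $i \geq 2$ the rectangle of height $1/\sqrt{i}$ over $[i-1,i]$ lies below the graph of $f$, giving $\frac{1}{\sqrt{i}} \leq \int_{i-1}^{i} \frac{dx}{\sqrt{x}}$. Summing this inequality over $i = 2, \ldots, t$ and evaluating the telescoped integral $\int_1^t x^{-1/2}\, dx = 2\sqrt{t} - 2$, I obtain $\sum_{i=2}^t 1/\sqrt{i} \leq 2\sqrt{t} - 2$. Adding the first term $1/\sqrt{1} = 1$ yields exactly $2\sqrt{t} - 1$, as required. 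The base case $t = 1$ is covered because the inequality holds with equality, so no separate verification is needed.

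As a sanity check, I would also know that induction on $t$ gives the same bound. The base case $t=1$ is the equality $1 = 2\sqrt{1}-1$. For the inductive step, assuming $\sum_{i=1}^t 1/\sqrt{i} \leq 2\sqrt{t}-1$, it suffices to show $2\sqrt{t} + 1/\sqrt{t+1} \leq 2\sqrt{t+1}$. Using the conjugate identity $2(\sqrt{t+1}-\sqrt{t}) = 2/(\sqrt{t+1}+\sqrt{t})$, this reduces to $\sqrt{t+1}+\sqrt{t} \leq 2\sqrt{t+1}$, i.e.\ $\sqrt{t} \leq \sqrt{t+1}$, which is immediate.

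There is no serious obstacle here; the only thing to be careful about is the off-by-one in the integral bound, namely that the $i=1$ term must be peeled off before comparing to $\int_1^t$, since $f$ blows up near $0$ and one cannot use $\int_0^t$. Either the integral argument (two lines) or the induction (three lines) gives a complete proof, and I would present the integral one for brevity.
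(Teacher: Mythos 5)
Your proof is correct. The paper states Fact~\ref{fact:SquareRoot} without any proof, so there is nothing to compare against directly; the closest relative in the paper is Lemma~\ref{lem:SumIntegral} (sums via the chain rule), which with $F(x)=2\sqrt{x}$, $A(i)=i$ and $u=1$ only yields the weaker bound $\sum_{i=1}^t 1/\sqrt{i} \leq 2\sqrt{t}$. Your integral comparison is the standard sharper argument: peeling off the $i=1$ term before comparing $\sum_{i=2}^t 1/\sqrt{i}$ to $\int_1^t x^{-1/2}\,dx = 2\sqrt{t}-2$ is exactly the right way to recover the $-1$, and your handling of the base case and of the singularity at $0$ is correct. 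The inductive sanity check is also valid. No gaps.
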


\begin{fact}
\FactName{LogBound}
$\log(x) \leq x-1$ for $x \geq 0$.
\end{fact}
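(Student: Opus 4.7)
The plan is to use the standard one-variable calculus argument: define the auxiliary function $f(x) = x - 1 - \log(x)$ on $(0, \infty)$ and show it is nonnegative, with the case $x = 0$ handled separately since $\log(0) = -\infty \leq -1$.

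First I would compute $f'(x) = 1 - 1/x$, which vanishes uniquely at $x = 1$, is negative on $(0,1)$, and is positive on $(1,\infty)$. This identifies $x = 1$ as the global minimizer of $f$ on $(0,\infty)$. Evaluating gives $f(1) = 1 - 1 - \log(1) = 0$, so $f(x) \geq 0$ for all $x > 0$, which rearranges to the claimed inequality. Alternatively, one can appeal to concavity of $\log$: the graph of $\log$ lies below each of its tangent lines, and the tangent line at $x = 1$ is precisely $y = x - 1$ since $\log(1) = 0$ and $(\log)'(1) = 1$.

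I do not anticipate any real obstacle; the only minor care is whether the statement is interpreted at $x = 0$, but as noted this reduces to $-\infty \leq -1$. The proof is one or two lines at most.
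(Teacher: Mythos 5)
Your proof is correct. The paper states this as a bare Fact with no proof supplied, so there is nothing to compare against; your argument (global minimum of $x - 1 - \log(x)$ at $x=1$, or equivalently the tangent line to the concave function $\log$ at $x=1$) is the standard justification, and your handling of the degenerate case $x=0$ via the convention $\log(0) = -\infty$ is fine.
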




The following proposition is a variant of an inequality
that is frequently used in online learning; see, e.g., \citet[Lemma 3.5]{AuerCG02}, \citet[Lemma 4]{McMahan17}. \onlyJournal{Since well-known proofs can be easily adapted to prove the version used here, we defer a complete proof to our technical report~\citep{TR}}.

\begin{restatable}{proposition}{Auer}
\label{prop:at_sum}
Let $u>0$ and $a_1, a_2, \ldots, a_T \in [0,u]$.
Then
\begin{equation*}
    \sum_{t=1}^T \frac{a_t}{\sqrt{ u + \sum_{i<t} a_i }}
~\leq~ 2 \sqrt{ \sum_{t=1}^T a_t }.
\end{equation*}
\end{restatable}
\onlyTR{
Although it is easy to prove this inequality by induction, the following proof may provide more intuition. 
The proof is based on a generic lemma on approximating sums by integrals.

\begin{lemma}[Sums with chain rule]
\label{lem:SumIntegral}
Let $S \subseteq \R$ be an interval.
Let $F : S \rightarrow \R$ be concave
and differentiable on the interior of $S$.
Let $u \geq 0$ and let $A : \{0,\ldots,T\} \rightarrow S$ satisfy $A(i)-A(i-1) \in [0,u]$ for each $1 \leq i \leq T$.
Then
$$
\sum_{i=1}^T F'\big( u+A(i-1) \big) \cdot (A(i) - A(i-1))
    ~\leq~ F(A(T))-F(A(0)).
$$
\end{lemma}

As $u \rightarrow 0$, the left-hand side becomes comparable to
$\int_0^T F'(A(x)) A'(x) \, dx$,
an expression that has no formal meaning since $A$ is only defined on integers.
If this expression existed, it would equal the right-hand side by
the chain rule.

\begin{proof}[of Lemma~\ref{lem:SumIntegral}]
Since $F$ is concave, $f:=F'$ is non-increasing.
Fix any $1 \leq i \leq T$
and observe that $f(x)  \geq f(A(i)) \geq f(u+A(i-1))$ for all $x \leq A(i)$.
Thus
\begin{equation*}
    f(u+A(i-1))\cdot(A(i)-A(i-1))
    ~\leq~ \int_{A(i-1)}^{A(i)} f(x) \, dx
    ~=~ F(A(i))-F(A(i-1)).
\end{equation*}
Summing over $i$, the right-hand side telescopes, which yields the result.
\end{proof}

\begin{proof}[of Proposition~\ref{prop:at_sum}]
Apply Lemma~\ref{lem:SumIntegral}
with $S = \R_{\geq 0}$, $F(x) = 2 \sqrt{x}$ and $A(i) = \sum_{1 \leq j \leq i} a_j$.
\end{proof}
}

We use following technical proposition in our proof of the first-order regret bounds from \Corollary{first-order-bound}. \onlyJournal{Since the proof is not enlightening nor complicated, we also defer it to our technical report~\citep{TR}.}

\begin{proposition}
\PropositionName{ineq_xy}
Let $x,y, \alpha, \beta > 0$.
\begin{align*}
\text{If}\quad
x-y &~\leq~ \alpha \sqrt{x} + \beta, \quad
\text{then}\quad
x-y ~\leq~ \alpha \sqrt{y} + \beta + \alpha \sqrt{\beta} + \alpha^2.
\end{align*}
\end{proposition}
\onlyTR{
\begin{proof}
The proposition's hypothesis yields
\[
y + \beta + \frac{\alpha^2}{4}
~\geq~ x - \alpha \sqrt{x} + \frac{\alpha^2}{4}
~=~ \left( \sqrt{x} - \frac{\alpha}{2} \right)^2.
\]
Taking the square root and rearranging,
\[
\sqrt{x} ~\leq~ 
    \sqrt{y + \beta + \frac{\alpha^2}{4}} + \frac{\alpha}{2}. 
\]
Squaring both sides and rearranging,
\begin{equation*}
x
~\leq~ y + \alpha \sqrt{y + \beta + \frac{\alpha^2}{4}} + \beta + \frac{\alpha^2}{2} ~\leq~ y + \alpha \sqrt{y} + \alpha\sqrt{\beta} + \beta + \alpha^2,
\end{equation*}
by subadditivity of the square root.
\end{proof}
}


\subsection{Bregman divergence properties}
\label{app:Bregman}

The following lemma collects basic facts regarding the a mirror map \(\Phi\) (or simply a function of Legendre type) and the Bregman
divergence it induces. See~\citet[Lemma~11.5 and Proposition~11.1]{game_prediction}.

\begin{lemma} \label{lem:bregman-div} The mirror map \(\Phi\) and the Bregman divergence it induces 
  satisfy the following properties:
\begin{itemize} 
    \item $D_\Phi(x, y)$ is convex in $x$.
    \item $\nabla \Phi ( \nabla \Phi^* (z) ) = z$ \ and \
      $\nabla^* \Phi ( \nabla \Phi (x) ) = x$ \ for all $x$ and $z$.
    \item
      $D_\Phi(x, y) = D_{ \Phi^* } ( \nabla \Phi(y), \nabla \Phi(x) )$
      \ for all $x$ and $y$.
\end{itemize}
\end{lemma}

\begin{proposition}
\PropositionName{BregmanStrong}
If $\Phi$ is $\rho$-strongly convex with respect to $\norm{\cdot}$ then
$D_\Phi(x,y) \geq \frac{\rho}{2} \norm{x-y}^2$.
\end{proposition}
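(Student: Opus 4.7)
The plan is to unwrap both sides to definitions and observe that the inequality is essentially immediate. Recall that the Bregman divergence induced by $\Phi$ is
\[
D_\Phi(x,y) ~=~ \Phi(x) - \Phi(y) - \inner{\nabla \Phi(y)}{x-y},
\]
while $\rho$-strong convexity of $\Phi$ with respect to $\norm{\cdot}$ is the statement that, for all $x, y$ in the domain,
\[
\Phi(x) ~\geq~ \Phi(y) + \inner{\nabla \Phi(y)}{x-y} + \frac{\rho}{2} \norm{x-y}^2.
\]
Rearranging this inequality so that the affine part of the first-order Taylor expansion appears on the left yields exactly $D_\Phi(x,y) \geq \frac{\rho}{2} \norm{x-y}^2$, which is the claim.

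The only thing I would want to be careful about is which definition of strong convexity the paper is using, since in the mirror-map literature one sometimes takes either the gradient-based inequality above, the midpoint (Jensen-style) inequality, or even a definition already stated in terms of the Bregman divergence. If the paper adopts one of the latter conventions, I would insert a one-line appeal to the standard equivalence (for differentiable $\Phi$) between those formulations and the first-order inequality before performing the substitution. Since $\Phi$ is assumed to be of Legendre type in this section, $\nabla \Phi$ is well-defined on the interior of the domain, so there is no technical obstacle.

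There is essentially no hard step; the statement is a definitional reformulation. I would therefore keep the proof to one short display, making explicit the gradient-based definition of $\rho$-strong convexity, subtracting $\Phi(y) + \inner{\nabla \Phi(y)}{x-y}$ from both sides, and identifying the left-hand side as $D_\Phi(x,y)$.
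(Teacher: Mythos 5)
Your proof is correct: the paper states this proposition without proof, treating it as a standard fact, and your argument — subtracting the first-order Taylor affine part from both sides of the gradient-based strong-convexity inequality to recognize $D_\Phi(x,y)$ on the left — is exactly the canonical one-line justification. Your caveat about which definition of strong convexity is in force is reasonable but not an issue here; the first-order inequality you use is the intended one for the differentiable Legendre-type mirror maps of this paper.
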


\subsubsection{Differences of Bregman divergences}
\AppendixName{TriBreg}

Recall that in \eqref{eq:TriBreg} we defined the notation
\[
\TriBreg{a}{b}{c}
 ~\coloneqq~ D_\Phi(a,c) - D_\Phi(b,c)
 ~=~ \Phi(a)-\Phi(b) - \inner{ \nabla \Phi(c) }{ a-b }.
\]
This has several useful properties, which we now discuss.

\begin{proposition}
\PropositionName{TriBregLinear}
$\TriBreg{a}{b}{p}$ is linear in $\hat{p}$.
In particular,
\[
\TriBreg{a}{b}{\Grad \Phi^*(\hat{p}-\hat{q})}
    ~=~ \TriBreg{a}{b}{p} + \inner{\hat{q}}{a-b}
    \qquad\forall \hat{q} \in \bR^n.
\]
\end{proposition}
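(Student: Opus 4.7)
The plan is to unpack the definition of $\TriBreg{a}{b}{c}$ and observe that $c$ enters only through $\nabla\Phi(c)$; the substitution $c = \nabla\Phi^*(\hat{p}-\hat{q})$ then collapses via the inverse relation from \Lemma{bregman-div}.

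First, I would introduce the notation $\hat{p} := \nabla\Phi(p)$ and rewrite the definition of $\TriBreg{a}{b}{p}$ as $\Phi(a) - \Phi(b) - \inner{\hat{p}}{a-b}$. From this expression, linearity in $\hat{p}$ is immediate: the only term involving $\hat{p}$ is $\inner{\hat{p}}{a-b}$, which is linear in $\hat{p}$, while $\Phi(a) - \Phi(b)$ is a constant as far as $\hat{p}$ is concerned. This establishes the first assertion of the proposition.

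Next, to prove the identity, I would evaluate $\TriBreg{a}{b}{\nabla\Phi^*(\hat{p}-\hat{q})}$ by applying the second bullet of \Lemma{bregman-div}, namely $\nabla\Phi(\nabla\Phi^*(z)) = z$ with $z = \hat{p}-\hat{q}$. This gives
\[
\TriBreg{a}{b}{\nabla\Phi^*(\hat{p}-\hat{q})}
~=~ \Phi(a) - \Phi(b) - \inner{\hat{p}-\hat{q}}{a-b}.
\]
Expanding the inner product by bilinearity and regrouping $\Phi(a) - \Phi(b) - \inner{\hat{p}}{a-b}$ back into $\TriBreg{a}{b}{p}$ yields the stated identity $\TriBreg{a}{b}{p} + \inner{\hat{q}}{a-b}$.

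There is no substantive obstacle; the argument is essentially the observation that $\nabla\Phi$ and $\nabla\Phi^*$ are inverse to one another for a Legendre mirror map, combined with bilinearity of the inner product. The only care needed is to state clearly that the claim of linearity is in the dual variable $\hat{p} = \nabla\Phi(p)$, not in $p$ itself.
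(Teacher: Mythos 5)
Your argument is correct and is precisely what the paper means by its one-line proof (``Immediate from the definition''): unpack $\TriBreg{a}{b}{c} = \Phi(a)-\Phi(b)-\inner{\nabla\Phi(c)}{a-b}$, use $\nabla\Phi(\nabla\Phi^*(\hat{p}-\hat{q}))=\hat{p}-\hat{q}$, and split the inner product by bilinearity. Your closing caveat --- that the (affine) linearity is in the dual variable $\hat{p}$ rather than in $p$ --- is exactly the right point of care, and nothing is missing.
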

\begin{proof}
Immediate from the definition.
\end{proof}

\begin{proposition}
\PropositionName{GeneralBregmanTriangle}
For all $a,b,c,d \in \cD$,
$$
\TriBreg{a}{b}{d} - \TriBreg{a}{b}{c}
~=~
\inner{ \hat{c} - \hat{d} }{ a - b }
~=~ \TriBreg{a}{b}{d} + \TriBreg{b}{a}{c}.
$$
\end{proposition}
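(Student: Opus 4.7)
The plan is to prove both equalities by direct expansion using the definition
\[
\TriBreg{a}{b}{c} ~=~ \Phi(a)-\Phi(b) - \inner{\hat{c}}{a-b},
\]
where I write $\hat{c} \coloneqq \nabla\Phi(c)$ throughout. This is essentially a bookkeeping exercise: the $\Phi(a)-\Phi(b)$ terms will cancel in both identities, leaving only linear expressions in the dual variables $\hat{c}$ and $\hat{d}$.

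For the first equality, I would subtract the two definitions:
\[
\TriBreg{a}{b}{d} - \TriBreg{a}{b}{c}
 ~=~ \bigl(\Phi(a)-\Phi(b) - \inner{\hat{d}}{a-b}\bigr) - \bigl(\Phi(a)-\Phi(b) - \inner{\hat{c}}{a-b}\bigr)
 ~=~ \inner{\hat{c}-\hat{d}}{a-b}.
\]
Note this can also be seen immediately as a special case of Proposition~\ref{prop:TriBregLinear}, by linearity of $\TriBreg{a}{b}{\cdot}$ in the dual variable.

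For the second equality, I would expand both terms separately and use the antisymmetry $\inner{\hat{c}}{b-a} = -\inner{\hat{c}}{a-b}$:
\[
\TriBreg{a}{b}{d} + \TriBreg{b}{a}{c}
 ~=~ \bigl(\Phi(a)-\Phi(b) - \inner{\hat{d}}{a-b}\bigr) + \bigl(\Phi(b)-\Phi(a) - \inner{\hat{c}}{b-a}\bigr)
 ~=~ \inner{\hat{c}-\hat{d}}{a-b},
\]
matching the middle expression. There is no real obstacle here; the only thing to be careful about is tracking the sign flip that arises from swapping $a$ and $b$ in the second $\TriBreg{\cdot}{\cdot}{\cdot}$ term, which is exactly what produces the cancellation of the $\Phi(a)-\Phi(b)$ contribution.
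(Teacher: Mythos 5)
Your proof is correct and follows essentially the same route as the paper: the first equality is the linearity property (Proposition~\ref{prop:TriBregLinear}), and the second is the antisymmetry $\TriBreg{b}{a}{c} = -\TriBreg{a}{b}{c}$, which your direct expansion makes explicit. No gaps.
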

\begin{proof}
The first equality holds from \Proposition{TriBregLinear}
with $\hat{p} = \hat{c}$ and $\hat{q} = \hat{c} - \hat{d}$.
The second equality holds since $\TriBreg{b}{a}{c} = -\TriBreg{a}{b}{c}$.
\end{proof}

An immediate consequence is the ``generalized triangle inequality
for Bregman divergence'', such as in \citet[Eq.~(4.1)]{Bubeck15} or in \citet[Lemma 4.1]{BeckT03}.

\begin{proposition}
\PropositionName{BregmanTriangle}
For all $a,b,d \in \cD$,
$$ D_\Phi(a,d) - D_\Phi(b,d) + D_\Phi(b,a)
~=~
\inner{ \hat{a} - \hat{d} }{ a - b }
$$
\end{proposition}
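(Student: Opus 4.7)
The plan is to derive \Proposition{BregmanTriangle} as a direct specialization of \Proposition{GeneralBregmanTriangle}, taking $c := a$. First I would observe that the triangle-shaped quantity $\TriBreg{a}{b}{c}$ reduces, when $c = a$, to
\[
\TriBreg{a}{b}{a} ~=~ D_\Phi(a,a) - D_\Phi(b,a) ~=~ -D_\Phi(b,a),
\]
since $D_\Phi(a,a) = 0$. Meanwhile, by definition $\TriBreg{a}{b}{d} = D_\Phi(a,d) - D_\Phi(b,d)$.

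Next I would apply \Proposition{GeneralBregmanTriangle} with $c = a$, which gives
\[
\TriBreg{a}{b}{d} - \TriBreg{a}{b}{a} ~=~ \inner{\hat{a} - \hat{d}}{a - b}.
\]
Substituting the two identities above into this equation turns the left-hand side into $D_\Phi(a,d) - D_\Phi(b,d) + D_\Phi(b,a)$, matching the claim.

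There is no real obstacle here: the hard work is already packaged in \Proposition{GeneralBregmanTriangle}, whose proof in turn rests on the linearity of $\TriBreg{a}{b}{p}$ in the dual variable $\hat{p}$ (\Proposition{TriBregLinear}). The only thing to be careful about is the bookkeeping that $\hat{c} = \hat{a}$ when we take $c = a$, which is immediate from the definition $\hat{c} = \nabla \Phi(c)$.
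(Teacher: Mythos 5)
Your proof is correct and follows exactly the paper's own route: the paper also derives \Proposition{BregmanTriangle} by applying \Proposition{GeneralBregmanTriangle} with $c=a$ and using $D_\Phi(a,a)=0$. Your write-up just spells out the bookkeeping more explicitly.
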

\begin{proof}
Apply \Proposition{GeneralBregmanTriangle} with $c=a$ and use $D_\Phi(a,a)=0$.
\end{proof}

\begin{proposition}
\PropositionName{BregmanIdentity}
Let $a,b,c,u,v \in \R^n$ satisfy
$\gamma \hat{a} + (1-\gamma) \hat{b} = \hat{c}$
for some $\gamma \in \bR$.
Then
$$
        \gamma   \TriBreg{u}{v}{a}
\:+\: (1-\gamma) \TriBreg{u}{v}{b}
~=~              \TriBreg{u}{v}{c}.
$$
\end{proposition}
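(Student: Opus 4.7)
The plan is to prove this as a direct consequence of \Proposition{TriBregLinear}, which asserts that $\TriBreg{u}{v}{p}$ is linear in the dual variable $\hat{p} = \nabla\Phi(p)$. Since we are given the dual-space relation $\hat{c} = \gamma \hat{a} + (1-\gamma)\hat{b}$ (an affine combination with coefficients summing to $1$), and since the map $\hat{p} \mapsto \TriBreg{u}{v}{p}$ is affine in $\hat{p}$, it commutes with such affine combinations. That is the entire content of the proposition.

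To execute this, I would first recall the explicit formula
\[
\TriBreg{u}{v}{p} ~=~ \Phi(u) - \Phi(v) - \inner{\hat{p}}{u-v},
\]
which makes the affine dependence on $\hat{p}$ manifest. Then I would write
\[
\gamma \TriBreg{u}{v}{a} + (1-\gamma)\TriBreg{u}{v}{b}
~=~ \bigl(\gamma + (1-\gamma)\bigr)\bigl(\Phi(u)-\Phi(v)\bigr) - \inner{\gamma \hat{a} + (1-\gamma)\hat{b}}{u-v},
\]
and apply the hypothesis $\gamma \hat{a} + (1-\gamma)\hat{b} = \hat{c}$ to collapse the right-hand side to $\Phi(u)-\Phi(v) - \inner{\hat{c}}{u-v} = \TriBreg{u}{v}{c}$.

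There is essentially no obstacle here: the entire proof rides on the fact that the three-point divergence is encoded linearly in the dual variable, and the hypothesis is stated precisely in dual form. One small stylistic choice is whether to quote \Proposition{TriBregLinear} directly (in which case the argument is a single sentence noting that any affine combination of $\hat{p}$'s pulls through an affine function) or to re-derive the one-line computation from the definition of $\TriBreg{\cdot}{\cdot}{\cdot}$; I would do the latter for clarity, since it is just as short.
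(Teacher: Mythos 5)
Your proof is correct and is essentially identical to the paper's: both expand $\TriBreg{u}{v}{p} = \Phi(u)-\Phi(v)-\inner{\hat{p}}{u-v}$ from the definition, cancel the $\Phi(u)-\Phi(v)$ terms using $\gamma+(1-\gamma)=1$, and invoke the hypothesis $\gamma\hat{a}+(1-\gamma)\hat{b}=\hat{c}$ to combine the inner products. Your framing via the affine dependence on $\hat{p}$ (Proposition~\ref{prop:TriBregLinear}) is a nice way to see why the one-line computation must work, but the executed argument is the same.
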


\begin{proof}
By definition of $D_\Phi$, the claimed identity is equivalent to
\begin{align*}
    \gamma \big( \Phi(u)-\Phi(v) - \inner{\Grad \Phi(a)}{u-v} \big)
    \:&+\:   (1-\gamma)   \big( \Phi(u)-\Phi(v) - \inner{\Grad \Phi(b)}{u-v} \big)
    \\
    &=~
                     \big( \Phi(u)-\Phi(v) - \inner{\Grad \Phi(c)}{u-v} \big).
\end{align*}
This equality holds by canceling $\Phi(u)-\Phi(v)$ and by the assumption that 
$\Grad \Phi(c) = (1-\gamma) \Grad\Phi(a) + \gamma \Grad\Phi(b)$.
\end{proof}

The following proposition is the ``Pythagorean theorem for Bregman divergence''.
Recall that $\Pi_\cX^\Phi(y) = \argmin_{u \in \cX} D_\Phi(u,y)$.
A proof may be found in \citet[Lemma 4.1]{Bubeck15}.

\begin{proposition}
\PropositionName{BregmanProjection}
Let $\cX \subset \bR^n$ be a convex set.
Let $p \in \bR^n$ and $\pi = \Pi_\cX^\Phi(p)$.
Then
$$
\TriBreg{z}{\pi}{p}
    ~\geq~ \TriBreg{z}{\pi}{\pi}
    ~=~    D_\Phi(z,\pi)
    \qquad\forall z \in \cX.
$$
\end{proposition}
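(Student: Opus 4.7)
The plan is to combine the first-order optimality condition characterizing the Bregman projection $\pi$ with the algebraic identity from Proposition~\ref{prop:GeneralBregmanTriangle}. The equation $\TriBreg{z}{\pi}{\pi} = D_\Phi(z,\pi)$ is immediate, since $\TriBreg{z}{\pi}{\pi} = D_\Phi(z,\pi) - D_\Phi(\pi,\pi)$ and $D_\Phi(\pi,\pi)=0$. So the substantive claim is that $\TriBreg{z}{\pi}{p} - \TriBreg{z}{\pi}{\pi} \geq 0$.

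First I would write down the first-order optimality condition for $\pi = \argmin_{u \in \cX} D_\Phi(u, p)$. Since $D_\Phi(\,\cdot\,, p)$ is differentiable on the interior of the domain with gradient $\nabla \Phi(u) - \nabla \Phi(p) = \hat u - \hat p$, and $\cX$ is convex, optimality gives
\[
\inner{\hat{\pi} - \hat{p}}{z - \pi} \;\geq\; 0
\qquad \forall z \in \cX.
\]
(Here I am using that $\Phi$ is Legendre, so that $\pi$ lies in the interior of the domain and $\nabla\Phi(\pi)$ is well defined; this is the standard setting in which Bregman projections are analyzed.)

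Next I would apply Proposition~\ref{prop:GeneralBregmanTriangle} with $a = z$, $b = \pi$, $c = p$, and $d = \pi$. That proposition yields
\[
\TriBreg{z}{\pi}{\pi} - \TriBreg{z}{\pi}{p} \;=\; \inner{\hat{p} - \hat{\pi}}{z - \pi},
\]
so rearranging,
\[
\TriBreg{z}{\pi}{p} - \TriBreg{z}{\pi}{\pi} \;=\; \inner{\hat{\pi} - \hat{p}}{z - \pi} \;\geq\; 0
\]
by the optimality condition. Combined with $\TriBreg{z}{\pi}{\pi} = D_\Phi(z,\pi)$, this gives exactly the claimed chain of (in)equalities.

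The only subtle point is the appeal to the first-order optimality condition: one must know that $\nabla\Phi$ is defined at $\pi$ and that the inner-product form of optimality is valid on $\cX$. For a Legendre-type mirror map this is standard (the projection lies in the relative interior of the domain, where $\Phi$ is differentiable), and it is the same regularity assumption already used implicitly in Lemma~\ref{lem:bregman-div}. Once this is granted, the rest is a direct algebraic substitution into Proposition~\ref{prop:GeneralBregmanTriangle}.
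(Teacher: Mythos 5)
Your argument is correct. Note, though, that the paper does not actually prove \Proposition{BregmanProjection}: it delegates the proof to the cited references (Bubeck, Lemma 4.1; Zhang, Eq.~(17)), so there is no in-paper proof to compare against. What you have written is essentially the standard argument that appears in those references: the identity $\TriBreg{z}{\pi}{\pi}=D_\Phi(z,\pi)$ is immediate from $D_\Phi(\pi,\pi)=0$, and the inequality reduces, via \Proposition{GeneralBregmanTriangle} with $a=z$, $b=\pi$, $c=p$, $d=\pi$, to the first-order optimality condition $\inner{\hat{\pi}-\hat{p}}{z-\pi}\geq 0$ for all $z\in\cX$. A nice feature of your write-up is that every ingredient is already available inside the paper itself: the optimality condition you invoke is exactly the normal-cone characterization proved later as Lemma~\ref{lemma:projection} (which also settles your "subtle point," since that lemma shows $\pi\in\cD\cap\cX$, so $\nabla\Phi(\pi)$ is well defined for a Legendre-type $\Phi$), and the algebraic step is \Proposition{GeneralBregmanTriangle}. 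One cosmetic remark: the proposition states $p\in\bR^n$, but for $\TriBreg{z}{\pi}{p}$ to be defined one needs $\nabla\Phi(p)$, i.e.\ $p\in\cD$; this is an imprecision of the statement rather than of your proof, and your argument is valid under that reading.
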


A generalization of the previous proposition can be obtained by using linearity.

\begin{proposition}
\PropositionName{GeneralBregmanProjection}
Let $\cX \subset \bR^n$ be a convex set.
Let $p \in \bR^n$ and $\pi = \Pi_\cX^\Phi(p)$.
Then
$$
\TriBreg{v}{\pi}{\Grad \Phi^*(\hat{p}-\hat{q})}
    ~\geq~ \TriBreg{v}{\pi}{\Grad \Phi^*(\hat{\pi}-\hat{q})}
    \qquad\forall v \in \cX ,\, \hat{q} \in \bR^n.
$$
\end{proposition}
\begin{proof}
\begin{align*}
\TriBreg{v}{\pi}{\Grad \Phi^*(\hat{p}-\hat{q})}
    &~=~ \TriBreg{v}{\pi}{p} + \inner{\hat{q}}{v-\pi}
        &&\qquad\text{(by \Proposition{TriBregLinear})} \\
    &~\geq~ \TriBreg{v}{\pi}{\pi} + \inner{\hat{q}}{v-\pi}
        &&\qquad\text{(by \Proposition{BregmanProjection})} \\
    &~=~ \TriBreg{v}{\pi}{\Grad \Phi^*(\hat{\pi}-\hat{q})}
        &&\qquad\text{(by \Proposition{TriBregLinear})}.
\end{align*}
\end{proof}

\section{Additional proofs for \Section{rwm-ds-omd}}
\AppendixName{rwm-ds-omd}

An initial observation shows that
$\Lambda$ is non-negative in the experts' setting.

\begin{proposition}
\PropositionName{LambdaNonNeg}
$\Lambda(a,b) \geq 0$ for all $a \in \cX$, $b \in \cD$.
\end{proposition}

\begin{proof}
Let us write $\Lambda(a,b) = - \sum_{i=1}^n a_i \ln \frac{b_i}{a_i} +
\ln\big(\sum_{i=1}^n b_i\big)$. Since $a$ is a probability distribution,
we may apply Jensen's inequality to show that this expression is
non-negative.
\end{proof}


\begin{proofof}{\Proposition{dual_somd_KL}}
Since $a, b \in \cX$ we have $\norm{a}_1=\norm{b}_1=1$.
Then
\begin{alignat*}{3}
\TriBreg{a}{b}{c}
        &~=~ D_\KL(a,c) - D_\KL(b,c) \\
        &~=~ \big(D_\KL(a,c) + 1 - \norm{c}_1 + \ln\norm{c}_1\big)
           - \big(D_\KL(b,c) + 1 - &&\norm{c}_1 + \ln\norm{c}_1 \big) \\
        &~=~ \Lambda(a,c) - \Lambda(b,c)
        &&\qquad\text{(by definition of $\Lambda$)} \\
        &~\leq~ \Lambda(a,c)
        &&\qquad\text{(by \Proposition{LambdaNonNeg}).}
\end{alignat*}
\end{proofof}

\begin{proofof}{\Proposition{LambdaTaylor}}
Let $b = \Grad\Phi^*(\hat{a}-\eta\hat{q})$.
By \eqref{eq:ExpertMirror}, $b_i = a_i \exp(-\eta \hat{q}_i)$.
Then 
\begin{alignat*}{3}
\Lambda(a,\Grad\Phi^*(\hat{a}-\eta\hat{q}))
&~=~ \sum_{i=1}^n a_i \ln(a_i/b_i) + \ln \norm{b}_1 \\
&~=~ \sum_{i=1}^n \eta a_i \hat{q}_i + \ln\Big(\sum_{i=1}^n a_i \exp(-\eta \hat{q}_i) \Big)\\
&~\leq~ \sum_{i=1}^n \eta a_i \hat{q}_i + \sum_{i=1}^n a_i \exp(-\eta \hat{q}_i) - 1
&&\qquad\text{(by \Fact{LogBound})}\\
&~\leq~ \sum_{i=1}^n \eta a_i \hat{q}_i
 + \sum_{i=1}^n a_i \left( 1 - \eta \hat{q}_i + \frac{ \eta^2 \hat{q}_{i}^2 }{2} \right) - 1 &&\qquad\text{(by \Fact{ExpTaylor2})} \\
&~\leq~ \eta^2 \sum_{i=1}^n a_i \hat{q}_i/2,
\end{alignat*}
using $\sum_{i=1}^n a_i = 1$ (since $a \in \cX$) 
and $\hat{q}_i^2 \leq \hat{q}_i$ (since $\hat{q} \in [0,1]^n$).
\end{proofof}

\onlyTR{
\section{Remarks on lower bounds for the expert's problem}
\label{sec:app_lowerbound}

We have seen that dual averaging achieves
regret $\sqrt{ T \ln n }$ for all $T$.  Here we present a 
lower bound analysis for DA showing that this is the best one can hope for.

\begin{theorem}
\label{thm:lowerbound_arwm}
There exists a value of $n$ such that,
for every $T>0$, there exists a sequence of vectors $\{c_t \mid c_t \in \{0,1\}^n \}_{t=1}^T$ such that for cost functions \(f_t \coloneqq \iprod{c_t}{\cdot}\) for each \(t \in \{1, \cdots, T\}\) we have
\[
    \underset{t \to \infty}{\lim }\ \frac{ \mathrm{Regret}_{\text{DA} }(t)  }{ \sqrt{ t \ln n } }
    ~\geq~ 1,
\]
where $\mathrm{Regret}_{\text{DA} }(T)$ denotes the worst-case regret (that is, taking the supremum of the comparison point over the simplex) of the dual averaging algorithm used in \citep[Theorem 2.4]{introduction-online-optimization}.
\end{theorem}

It is known in the literature~\citep[\S 3.7]{game_prediction} that no algorithm can achieve a regret bound better than $\sqrt{\smash[b]{\sfrac{T}{2} \, \ln n}}$ for the problem of learning with expert advice (as $(T,n) \rightarrow \infty$).
Thus, there is still a $\sqrt{2}$ gap between the best upper and lower bounds (that hold for all $T$) for prediction with expert advice.
his gap was previously pointed out by \citet[pp.~52]{Ger11}. Even in a recent  study of lower-bounds on the regret of multiplicative-weights update methods by~\citet{GravinPS17a}, no lower-bounds better than \(\sqrt{\smash[b]{\sfrac{T}{2} \, \ln n}}\) for the anytime setting were known. In their work, they were interested in lower-bounds that were not asymptotic in \(n\) and studied many different families of algorithms parameterized by families of learning-rate schedules (fixed, decreasing with \(t\), deterministic, and randomized). Yet, they never restrict for learning rates that \emph{do not} depend on the time-horizon \(T\). Since MWU methods suffer at most \(\sqrt{\smash[b]{\sfrac{T}{2} \, \ln n}}\) regret for appropriately tuned learning rate (which depends on \(T\)), their approach should not and does not yield lower-bounds on the regret greater than \(\sqrt{\smash[b]{\sfrac{T}{2} \, \ln n}}\).

\begin{algorithm}[H]
    \caption{Adaptive randomized weighted majority based on DA~\citep[Theorem 2.4]{introduction-online-optimization}.}
    \label{alg:lazy_arwm_supp}
 \begin{algorithmic}
    \STATE {\bfseries Input:} $\eta:\mathbb{N} \rightarrow \R $
    \STATE $x_1 \coloneqq (1/n, 1/n, ...)^\intercal$
    \FOR{$t=1, 2, \ldots$}
         \STATE Incur cost $f_t(x_t) = \iprod{c_t}{x_t}$ and receive cost vector $c_t \in [0,1]^n.$
         \FOR{$j=1,2,\dots,n$}
             \STATE $y_{t+1,j} = x_{1,j} \exp \left( - \eta_t \sum_{k=1}^t c_{k}(j) \right)$
         \ENDFOR
         \STATE $x_{t+1} = y_{t+1} / \| y_{t+1} \|_1$
    \ENDFOR
 \end{algorithmic}
 \end{algorithm}

\begin{proof}
The detailed algorithm described by \citet[Theorem 2.4]{introduction-online-optimization} is shown in Algorithm~\ref{alg:lazy_arwm_supp}, where $\eta_t$ is set as $\sqrt{4\ln n/ t}$, we consider the case when $n=2$, and construct the following cost vectors:
\[
c_t =
\begin{cases}
(1, 0)^\intercal \qquad 1 \leq t <  \tau, t~\text{is odd} \\
(0, 1)^\intercal \qquad 1 \leq t <  \tau, t~\text{is even} \\
(1, 0)^\intercal \qquad \tau \leq t \leq T, 
\end{cases}
\qquad \forall t \geq 1,
\]
where $\tau \coloneqq \lfloor T - \log (T) \sqrt{T} \rfloor$.
Without loss of generality, we assume that $\tau$ is an odd number.

Throughout the remainder of this proof, denote \(\Regret(T)\) as the worts-case regret on \(T\) rounds, that is,
\begin{equation*}
    \Regret(T) \coloneqq \sup_{z \in \Delta_n} \Regret(T, z). 
\end{equation*}
It is obvious that the second expert is the best one and our regret at time $T$ is
\[
\mathrm{Regret}(T) = \sum_{ 1 \leq t < \tau } c_t^\intercal x_t -  \frac{\tau -1 }{2} + \sum_{ \tau \leq t \leq  T } c_t^\intercal x_t
\]
It is also easy to check that
\[
x_t =
\begin{cases}
(1/2, 1/2)^\intercal \qquad 1 \leq t <  \tau, t~\text{is odd} \\
\displaystyle \left(\frac{1}{1+\exp(\eta_{t-1}) } , \frac{1}{1 + \exp( -\eta_{t-1} )} \right)^\intercal \qquad 1 \leq t < \tau, t~\text{is even} \\
\displaystyle \left(\frac{1}{1+\exp(\eta_{t-1}(t - \tau) ) } , \frac{1}{1 + \exp( -\eta_{t-1}(t - \tau) )} \right)^\intercal \qquad \tau \leq t \leq T.
\end{cases}
\]
Thus,
\begin{align*}
    \mathrm{Regret}(T) = \underbrace{ \sum_{1 \leq t < \tau, t~\text{is even} } \left( \frac{1}{1 + \exp( -\eta_{t-1} )} - \frac{1}{2} \right) }_{\text{Term 1}}+ \underbrace{ \sum_{t=\tau}^T \frac{1}{1 + \exp((t - \tau)\eta_{t-1} )} }_{ \text{Term 2} }.
\end{align*}
Let us first look at Term 1. We have
\begin{align*}
    \text{Term 1} & \overset{\rm(i)} \geq  \sum_{1 \leq t < \tau, \; t~\text{is even} } \frac{1 - \exp( - \eta_{t-1} ) }{4} \\
     & = \sum_{1 \leq t < \tau, \; t~\text{is even} } \frac{\eta_{i-1}}{4} + O( \eta_{i-1}^2 ) \\
     & \overset{\rm(ii)} \geq \frac{ \sqrt{4 \ln n} }{4} \sqrt{\tau} + o(\tau)
\end{align*}
where (i) is true since $\frac{1}{2 - x} - \frac{1}{2} \geq \frac{x}{4}~$ for all $x \in (0, 1) $ and (ii) is true by using the fact that $\sum_{t=1}^\tau \frac{1}{\sqrt{t}} \geq 2\sqrt{\tau} - 2 $.

By the definition of $\tau$, we have $\displaystyle \lim_{T \to \infty} \frac{\tau}{T} = 1$, thus
\begin{align}
    \lim_{T \to \infty} \frac{ \text{Term 1} }{ \sqrt{T \ln n} } = \frac{1}{2}. \label{eq:term1}
\end{align}

For Term 2, we have
\begin{align}
    \text{Term 2} & =  \sum_{t=\tau}^T \frac{1}{1 + \exp( (t - \tau)\eta_{t-1} )} \nonumber \\
    & \geq \sum_{t=\tau}^T \frac{1}{1 + \exp((t - \tau)\sqrt{ \frac{4\ln n}{t-1} } )} \nonumber \\
    & \geq \sum_{t=\tau}^T \frac{1}{1 + \exp((t - \tau)\sqrt{ \frac{4\ln n}{\tau-1} } )} \nonumber \\
    & \geq \int_{t=\tau}^{T} \frac{1}{1 + \exp((t - \tau)\sqrt{ \frac{4\ln n}{\tau-1} } )} dt \nonumber \\
    & = \int_{y=0}^{ \log(T)\sqrt{T} } \frac{1}{1 + \exp(y\sqrt{ \frac{4\ln n}{\tau-1} } )} dy.  \label{eq:term2}
\end{align}
Note that 
\[
\int \frac{1}{1 + \exp(\beta y)} dy = y - \frac{\ln \left( 1 + e^{\beta y} \right)  }{\beta}.
\]
Set $\beta = \sqrt{ \frac{4\ln n}{\tau-1} }$ and plug the above result to Eq~\ref{eq:term2}, we get the following,
\begin{align*}
    \text{Term 2} \geq \log(T) \sqrt{T} - \frac{\ln\left(1 + \exp\left( \sqrt{ \frac{4\ln n}{\tau-1} } \log(T) \sqrt{T} \right) \right) }{ \sqrt{ \frac{4\ln n}{\tau-1} } } + \frac{\ln 2}{\sqrt{ \frac{4\ln n}{\tau-1} }}.
\end{align*}
Using the fact that $\ln \left( 1+e^x \right) = x + o(x)$,
\begin{align*}
    \text{Term 2} \geq \log(T) \sqrt{T} - \log(T) \sqrt{T} + o\left( \sqrt{ \frac{4\ln n}{\tau-1} } \log(T) \sqrt{T} \right) + \ln2 \sqrt{ \frac{  ( \tau - 1) }{4 \ln n} }.
\end{align*}
Note that $n = 2$, thus 
\begin{align}
    \lim_{T \to \infty} \frac{ \text{Term 2} }{ \sqrt{T \ln n} } = \lim_{T \to \infty} \frac{\ln2 \sqrt{ \frac{  ( \tau - 1) }{4 \ln 2} }}{ \sqrt{T \ln 2} } = \frac{1}{2}. \label{eq:term3}
\end{align}
Combining Eq.~\ref{eq:term1} and Eq.~\ref{eq:term3}, we conclude that
\[
    \lim_{T \to \infty} \frac{ \mathrm{Regret(T)} }{ \sqrt{T \ln n} } = \lim_{T \to \infty} \frac{ \text{Term 1} }{ \sqrt{T \ln n} } + \lim_{T \to \infty} \frac{ \text{Term 2} }{ \sqrt{T \ln n} } \geq  \frac{1}{2} + \frac{1}{2} = 1.
\]
\end{proof}
}

\section{Additional proofs for \Section{comparison}}
\AppendixName{comparison_app}

At many points throughout this section we will need to talk about optimality
condition for problems where we minimize a convex function over a convex set. Such conditions depend on the \emph{normal cone} of the set on which the
optimization is taking place.

\begin{lemma}[\protect{\citealp[Theorem~27.4]{roc70}}]
     \label{lemma:opt_condition}
    Let \(h \colon \Ccal \to \Reals\) be a closed convex function such that \((\relint \Ccal)\cap(\relint \X)\neq \emptyset \). Then, \(x \in \argmin_{z \in \X} h(z) \) if and only if there is \(\ghat \in \subdiff[h](x)\) such that \( -\ghat \in N_\X(x) \).
\end{lemma}

Using the above result allows us to derive a useful characterization of points that realize the Bregman projections. This result is similar to \citet[Lemma~4.1]{Bubeck15} and we defer the complete proof to our technical report~\citep{TR}.

\begin{lemma} \label{lemma:projection}
    Let \(y \in \Dcal\) and \(x \in \bar{\Dcal}\). Then \(x =
    \Pi_\X^\Phi(y)\) if and only if \(x \in \Dcal \cap \X \) and \(\nabla \Phi(y) - \nabla
    \Phi(x) \in N_\X(x)\).
\end{lemma}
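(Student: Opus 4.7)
The plan is to view the Bregman projection as the minimization of the convex function
\[
h(u) ~\coloneqq~ D_\Phi(u,y) ~=~ \Phi(u) - \Phi(y) - \inner{\nabla\Phi(y)}{u-y}
\]
over the convex set \(\mathcal{X}\), where the domain of \(h\) is \(\mathcal{C} \coloneqq \bar{\mathcal{D}}\), and then apply the optimality characterization in Lemma~\ref{lemma:opt_condition}.

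First I would verify the hypotheses of Lemma~\ref{lemma:opt_condition}. Since \(\Phi\) is a mirror map of Legendre type, it is closed and convex, and \(h\) inherits closedness and convexity from \(\Phi\) (up to affine terms). The qualification \((\operatorname{relint}\mathcal{C}) \cap (\operatorname{relint}\mathcal{X}) \neq \emptyset\) is part of the standing assumption on mirror maps in the paper, so I would cite this and move on. Next, since \(\Phi\) is essentially smooth (Legendre type), any minimizer of \(h\) over \(\mathcal{X}\) must lie in \(\mathcal{D}\): otherwise \(\Vert \nabla\Phi(u)\Vert \to \infty\) along a sequence approaching the boundary, making \(h\) blow up or the subdifferential empty, which contradicts optimality. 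This is the reason for asserting \(x \in \mathcal{D}\cap\mathcal{X}\) rather than just \(x \in \bar{\mathcal{D}}\cap\mathcal{X}\).

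For \(x \in \mathcal{D}\), \(\Phi\) is differentiable at \(x\), so \(\partial h(x) = \{\nabla\Phi(x) - \nabla\Phi(y)\}\). Lemma~\ref{lemma:opt_condition} then states that \(x\) is a minimizer of \(h\) over \(\mathcal{X}\) if and only if there exists \(\hat{g} \in \partial h(x)\) with \(-\hat{g} \in N_\mathcal{X}(x)\), which here reduces to the single condition
\[
-\bigl(\nabla\Phi(x) - \nabla\Phi(y)\bigr) ~=~ \nabla\Phi(y) - \nabla\Phi(x) ~\in~ N_\mathcal{X}(x).
\]
This gives both implications simultaneously. For the forward direction, \(x = \Pi_\mathcal{X}^\Phi(y)\) forces \(x \in \mathcal{D}\cap\mathcal{X}\) by the Legendre argument above, and the normal-cone condition follows from Lemma~\ref{lemma:opt_condition}. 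For the converse, assuming \(x \in \mathcal{D}\cap\mathcal{X}\) and \(\nabla\Phi(y) - \nabla\Phi(x) \in N_\mathcal{X}(x)\), Lemma~\ref{lemma:opt_condition} certifies that \(x\) minimizes the strictly convex function \(h\) over \(\mathcal{X}\), hence \(x = \Pi_\mathcal{X}^\Phi(y)\); strict convexity of \(\Phi\) (Legendre type) ensures uniqueness of the minimizer.

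The main obstacle is the boundary argument showing that \(x \in \mathcal{D}\) rather than merely \(x \in \bar{\mathcal{D}}\). All the rest is a direct application of subdifferential calculus and the stated optimality lemma. I would phrase the boundary step by invoking the essential smoothness of Legendre-type mirror maps (as in the references cited before Lemma~\ref{lem:bregman-div}), which guarantees that \(\nabla\Phi\) diverges on \(\partial\mathcal{D}\), thereby precluding a minimizer of \(h\) on the boundary of the domain.
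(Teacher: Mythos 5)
Your proposal is correct and takes essentially the same route as the paper's proof: both reduce the statement to minimizing $D_\Phi(\cdot,y)$ over $\X$ via Lemma~\ref{lemma:opt_condition}, and both force the minimizer into $\Dcal$ by observing that the Legendre property makes the subdifferential of $\Phi$ (hence of $D_\Phi(\cdot,y)$) empty outside $\Dcal$, which is exactly the paper's appeal to Theorem~26.1 of Rockafellar. The one caveat is that your alternative justification that ``$h$ blows up'' near the boundary is not valid in general --- for negative entropy on the simplex, $\nabla\Phi$ diverges while $D_\Phi(\cdot,y)$ remains bounded --- so only the empty-subdifferential branch of your disjunction actually carries the argument.
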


\begin{proof}
    Suppose \(x \in \Dcal \cap \X \) and \(\nabla \Phi(y) - \nabla
    \Phi(x) \in N_\X(x)\). Since \(\nabla \Phi(y) - \nabla
    \Phi(x) = - \nabla(D_{\Phi}(\cdot,y))(x)\), by Lemma~\ref{lemma:opt_condition} we conclude that \(x \in \argmin_{z \in \X} D(z, y)\). Now suppose \(x =
    \Pi_\X^\Phi(y)\).  By Lemma~\ref{lemma:opt_condition} together with the definition of Bregman divergence, this is the case if and only if there is \(-g \in \subdiff[\Phi](x)\) such that \(-(g - \nabla \Phi(y)) \in N_\X(x)\). Since \(\Phi\) is of Legendre type we have \(\subdiff[\Phi](z) = \emptyset\) for any \(z \not\in \Dcal\) \citep[Theorem~26.1]{roc70}. Thus, \(x \in \Dcal\) and \(g = \nabla \Phi(x)\) since \(\Phi\) is differentiable. Finally, \(x \in \X\) by the definition of Bregman projection.
\end{proof}

For some of the proofs from \Section{comparison}, we need to one last result about the relation of subgradients and conjugate functions which is worth stating in full.

\begin{lemma}[\protect{\citealp[Theorem~23.5]{roc70}}]
    \label{lemma:subdiff_attainability}
    Let \(f \colon \X \to \Reals\), let \(x \in \X\) and let \(\yhat \in \Reals^n\). Then \( \yhat \in \subdiff(x)\) if and only if \(x\) attains \(\sup_{x \in
    \Reals^n}(\iprod{\yhat}{x} - f(x)) = f^*(\yhat)\).
\end{lemma}

\vskip 0.2in
\bibliography{ref}

\end{document}